\def\UrlSpecials{\do\~{\kern -.15em\lower .7ex\hbox{~}\kern .04em}} \catcode`~=13 
\newcommand{\nn}{\nonumber}
\newcommand{\calD}{\mathcal{D}}
\newcommand{\calN}{\mathcal{N}}
\newcommand{\calP}{\mathcal{P}}
\newcommand{\calR}{\mathcal{R}}
\newcommand{\calS}{\mathcal{S}}
\newcommand{\bA}{\mathbf{A}}
\newcommand{\bB}{\mathbf{B}}
\newcommand{\bE}{\mathbf{E}}
\newcommand{\bF}{\mathbf{F}}
\newcommand{\bg}{\mathbf{g}}
\newcommand{\bG}{\mathbf{G}}
\newcommand{\bh}{\mathbf{h}}
\newcommand{\bI}{\mathbf{I}}
\newcommand{\bs}{\mathbf{s}}
\newcommand{\bu}{\mathbf{u}}
\newcommand{\bv}{\mathbf{v}}
\newcommand{\bw}{\mathbf{w}}
\newcommand{\bx}{\mathbf{x}}
\newcommand{\bX}{\mathbf{X}}
\newcommand{\by}{\mathbf{y}}
\newcommand{\bz}{\mathbf{z}}
\newcommand{\bbE}{\mathbb{E}}
\newcommand{\bbN}{\mathbb{N}}
\newcommand{\bbR}{\mathbb{R}}
\DeclareMathAlphabet{\mathbsf}{OT1}{cmss}{bx}{n}
\DeclareMathAlphabet{\mathssf}{OT1}{cmss}{m}{sl}
\DeclareSymbolFont{bsfletters}{OT1}{cmss}{bx}{n}  
\DeclareSymbolFont{ssfletters}{OT1}{cmss}{m}{n}
\DeclareMathSymbol{\bsfGamma}{0}{bsfletters}{'000}
\DeclareMathSymbol{\ssfGamma}{0}{ssfletters}{'000}
\DeclareMathSymbol{\bsfDelta}{0}{bsfletters}{'001}
\DeclareMathSymbol{\ssfDelta}{0}{ssfletters}{'001}
\DeclareMathSymbol{\bsfTheta}{0}{bsfletters}{'002}
\DeclareMathSymbol{\ssfTheta}{0}{ssfletters}{'002}
\DeclareMathSymbol{\bsfLambda}{0}{bsfletters}{'003}
\DeclareMathSymbol{\ssfLambda}{0}{ssfletters}{'003}
\DeclareMathSymbol{\bsfXi}{0}{bsfletters}{'004}
\DeclareMathSymbol{\ssfXi}{0}{ssfletters}{'004}
\DeclareMathSymbol{\bsfPi}{0}{bsfletters}{'005}
\DeclareMathSymbol{\ssfPi}{0}{ssfletters}{'005}
\DeclareMathSymbol{\bsfSigma}{0}{bsfletters}{'006}
\DeclareMathSymbol{\ssfSigma}{0}{ssfletters}{'006}
\DeclareMathSymbol{\bsfUpsilon}{0}{bsfletters}{'007}
\DeclareMathSymbol{\ssfUpsilon}{0}{ssfletters}{'007}
\DeclareMathSymbol{\bsfPhi}{0}{bsfletters}{'010}
\DeclareMathSymbol{\ssfPhi}{0}{ssfletters}{'010}
\DeclareMathSymbol{\bsfPsi}{0}{bsfletters}{'011}
\DeclareMathSymbol{\ssfPsi}{0}{ssfletters}{'011}
\DeclareMathSymbol{\bsfOmega}{0}{bsfletters}{'012}
\DeclareMathSymbol{\ssfOmega}{0}{ssfletters}{'012}
\newcommand{\bSigma	}{\bm{\Sigma}}
\newcommand{\qednew}{\nobreak \ifvmode \relax \else
      \ifdim\lastskip<1.5em \hskip-\lastskip
      \hskip1.5em plus0em minus0.5em \fi \nobreak
      \vrule height0.75em width0.5em depth0.25em\fi}
\theoremstyle{plain}
\newtheorem{theorem}{Theorem}[section]
\newtheorem{lemma}[theorem]{Lemma}
\theoremstyle{definition}
\newtheorem{assumption}[theorem]{Assumption}
\theoremstyle{remark}
\newtheorem{remark}[theorem]{Remark}
\title{Generalized Eigenvalue Problems with \\ Generative Priors}
\author{%
  Zhaoqiang Liu $\qquad\qquad\qquad\qquad $ Wen Li\thanks{Corresponding author.} \\
  University of Electronic Science and Technology of China \\
  \texttt{\{zqliu12, liwenbnu\}@gmail.com}\\
  \AND
  Junren Chen \\
  University of Hong Kong \\
  \texttt{chenjr58@connect.hku.hk} \\
}
\begin{document}

\maketitle

\begin{abstract}
  Generalized eigenvalue problems (GEPs) find applications in various fields of science and engineering. For example, principal component analysis, Fisher's discriminant analysis, and canonical correlation analysis are specific instances of GEPs and are widely used in statistical data processing. In this work, we study GEPs under generative priors, assuming that the underlying leading generalized eigenvector lies within the range of a Lipschitz continuous generative model. Under appropriate conditions, we show that any optimal solution to the corresponding optimization problems attains the optimal statistical rate. Moreover, from a computational perspective, we propose an iterative algorithm called the Projected Rayleigh Flow Method (PRFM) to approximate the optimal solution. We theoretically demonstrate that under suitable assumptions, PRFM converges linearly to an estimated vector that achieves the optimal statistical rate. Numerical results are provided to demonstrate the effectiveness of the proposed method.
\end{abstract}

\section{Introduction}
\label{sec:intro}

The generalized eigenvalue problem (GEP) plays an important role in various fields of science and engineering~\cite{peters1970ax, parlett1998symmetric,golub2013matrix,ge2016efficient}. For instance, it underpins numerous machine learning and statistical methods, such as principal component analysis (PCA), Fisher's discriminant analysis (FDA), and canonical correlation analysis (CCA)~\cite{thompson1984canonical, jolliffe1986, mika1999fisher, brown2000discriminant, hardoon2004canonical,ghojogh2019eigenvalue,liu2019error,liu2019informativeness}. In particular, the GEP for a symmetric matrix $\bA \in \bbR^{n \times n}$ and a positive definite matrix $\bB \in \bbR^{n \times n}$ is defined as 
\begin{equation}
    \bA \bv_i = \lambda_i \bB \bv_i, \quad i=1,2,\ldots,n.\label{eq:def_gep}
\end{equation}
Here, $\lambda_i$ represent the generalized eigenvalues, and $\bv_i$ denote the corresponding generalized eigenvectors of the matrix pair $(\bA,\bB)$. The eigenvalues are ordered such that $\lambda_1 \ge \lambda_2 \ldots \ge \lambda_n$. Throughout this paper, we focus on the setting of symmetric semi-definite GEPs. 

According to the Rayleigh-Ritz theorem~\cite{parlett1998symmetric}, the leading generalized eigenvector of $(\bA,\bB)$ is the optimal solution to the following optimization problem:
\begin{equation}
    \max_{\bu \in \bbR^n} \bu^\top \bA \bu \quad \text{s.t.} \quad  \bu^\top \bB \bu = 1.\label{eq:opt_gep0}
\end{equation}
Or, in an equivalent form with respect to the Rayleigh quotient (up to the transformation of the norm of the optimal solutions):
\begin{equation}
    \max_{\bu \in \bbR^n, \bu \ne 0} \frac{\bu^\top \bA \bu}{\bu^\top \bB \bu}.\label{eq:opt_gep}
\end{equation}
Subsequent generalized eigenvectors of $(\bA,\bB)$ can be obtained through methods such as generalizations of Rayleigh quotient iteration~\cite{longsine1980simultaneous,dax2003orthogonal,nikpour2005generalizations}, subspace style iterations~\cite{zhang1999subspace,hafid2019generalized,wang2022new}, and the QZ method~\cite{moler1973algorithm,kaufman1977some,kressner2001numerical,camps2019rational}. 

In many practical applications, the matrices $\bA$ and $\bB$ may be contaminated by unknown perturbations, and we can only access approximate matrices $\hat{\bA}$ and $\hat{\bB}$ based on $m$ independent observations. Specifically, we have
\begin{equation}
\hat{\bA} = \bA + \bE, \quad \hat{\bB} = \bB + \bF, \label{eq:tilde_bAbB}
\end{equation}
where $\bE$ and $\bF$ are the perturbation matrices. In addition, in recent years, there has been an increasing interest in studying the GEP in the high-dimensional setting where $m \ll n$, with the prominent assumption that the leading generalized eigenvector $\bv_1 \in \bbR^n$ of the uncorrupted matrix pair $(\bA, \bB)$ ({\em cf.} Eq.~\eqref{eq:def_gep}) is sparse. This leads to the sparse generalized eigenvalue problem (SGEP), for which we can estimate the underlying signal $\bv_1$ based on the approximate matrices $\hat{\bA}$ and $\hat{\bB}$ by solving the following constrained optimization problem:
\begin{equation}
    \max_{\bu \in \bbR^n} \frac{\bu^\top \hat{\bA} \bu}{\bu^\top \hat{\bB} \bu} \quad \text{s.t.} \quad \bu^\top \hat{\bB} \bu \ne 0, \text{ } \|\bu\|_0 \le s,\label{eq:opt_sgep}
\end{equation}
where $\|\bu\|_0 = |\{i\,:\, u_i \ne 0\}|$ represents the number of non-zero entries of $\bu$ and $s \in \bbN$ is a parameter for the sparsity level. As mentioned in~\cite{tan2018sparse}, solving the non-convex optimization problem in Eq.~\eqref{eq:opt_sgep} is challenging in the high-dimensional setting because $\hat{\bB}$ is singular and not invertible, preventing us from converting the SGEP to the sparse eigenvalue problem and making use of classical algorithms that require taking the inverse of $\hat{\bB}$. 

Motivated by the remarkable success of deep generative models in a multitude of real-world applications, a new perspective on high-dimensional inverse problems has recently emerged. In this new perspective, the assumption that the underlying signal can be well-modeled by a pre-trained (deep) generative model replaces the common sparsity assumption. Specifically, in the seminal work~\cite{bora2017compressed}, the authors explore linear inverse problems with generative models and provide sample complexity upper bounds for accurate signal recovery. Furthermore, they presented impressive numerical results on natural image datasets, demonstrating that the utilization of generative models can result in substantial reductions in the required number of measurements compared to sparsity-based methods. Numerous subsequent works have built upon \cite{bora2017compressed}, exploring various aspects of inverse problems under generative priors~\cite{van2018compressed,heckel2019deep,kamath2020power,jalal2020robust,ongie2020deep,whang2020compressed,jalal2021instance,nguyen2021provable}.

In this work, we investigate the high-dimensional GEP under generative priors, which we refer to as the generative generalized eigenvalue problem (GGEP). The corresponding optimization problem becomes: 
\begin{equation}
    \max_{\bu \in \bbR^n} \frac{\bu^\top \hat{\bA} \bu}{\bu^\top \hat{\bB} \bu} \quad \text{s.t.} \quad \bu^\top \hat{\bB} \bu \ne 0, \text{ } \bu \in \calR(G),\label{eq:opt_ggep}
\end{equation}
where $\calR(G)$ denotes the range of a pre-trained generative model $G\,:\, \bbR^k \to \bbR^n$, with the condition that the input dimension $k$ is much smaller than the output dimension $n$ to enable accurate recovery of the signal using a small number of measurements or observations.

\subsection{Related Work}
\label{sec:rel_work}

This subsection provides a summary of existing works related to Sparse Generalized Eigenvalue Problems (SGEP) and inverse problems with generative priors.

\textbf{SGEP:} 
There have been extensive studies aimed at developing algorithms and providing theoretical guarantees for specific instances of SGEP. For instance, sparse principal component analysis (SPCA) is one of the most widely studied instances of SGEP, with numerous notable solvers in the literature, including the truncated power method \cite{yuan2013truncated}, Fantope-based convex relaxation method \cite{vu2013fantope}, iterative thresholding approach \cite{ma2013sparse}, regression-type method \cite{cai2013sparse}, and sparse orthogonal iteration pursuit \cite{wang2014tighten}, among others \cite{moghaddam2006spectral,d2007direct,mackey2008deflation,shen2008sparse,hein2010inverse,journee2010generalized,asteris2011sparse,kuleshov2013fast,papailiopoulos2013sparse,chang2016convex,wang2016statistical}. Additionally, significant developments for sparse FDA and sparse CCA, another two popular instances of SGEP, can be found in various works including \cite{guo2007regularized,witten2009extensions,parkhomenko2009sparse,lykou2010sparse,cai2011direct,clemmensen2011sparse,hardoon2011sparse,shao2011sparse,mai2012direct,chen2013sparse,chu2013sparse,kolar2014optimal,witten2009penalized,gao2017sparse,suo2017sparse,xu2019towards}.

There are also many works that propose general approaches for the SGEP, including the projection-free method proposed in~\cite{hwang2015projection}, the majorization-minimization approaches proposed in~\cite{sriperumbudur2011majorization,song2015sparse}, the decomposition algorithm proposed in~\cite{yuan2019decomposition}, an inverse-free truncated Rayleigh-Ritz method in~\cite{cai2020inverse}, the linear programming based sparse estimation method in~\cite{safo2018sparse}, among others~\cite{scott1981solving,moghaddam2008sparse,tan2018sparse,jung2019penalized,cai2021note,hu2023privacy}.

Among the works related to SGEP,~\cite{tan2018sparse} is the most relevant to ours. In~\cite{tan2018sparse}, the authors proposed a two-stage computational method for SGEP that achieves linear convergence to a point achieving the optimal statistical rate. Their method first computes an initialization vector via convex relaxation, and then refines the initial guess by truncated gradient ascent, with the method for the second stage being referred to as the truncated Rayleigh flow method. Our work generalizes the truncated Rayleigh flow method to the more complex scenario of using generative priors, and we also establish a linear convergence guarantee to an estimated vector with the optimal statistical error.

\textbf{Inverse problems with generative priors:} Since the seminal work~\cite{bora2017compressed} that investigated linear inverse problems with generative priors, various high-dimensional inverse problems have been studied using generative models. For example, under the generative modeling assumption, one-bit or more general single-index models have been investigated in~\cite{wei2019statistical,joseph2020one,liu2020sample,liu2020generalized,qiu2020robust,kafle2021one,chen2023unified}, spiked matrix models have been investigated in~\cite{aubin2019spiked,cocola2020nonasymptotic,cocola2021no}, and phase retrieval problems have been investigated in~\cite{hand2018phase,hyder2019alternating,jagatap2019algorithmic,wei2019statistical,shamshad2020compressed,aubin2020exact,liu2021towards,liu2022misspecified}. 

Among the works that study inverse problems using generative models, the recent work~\cite{liu2022generative} is the most relevant to ours. Specifically, in~\cite{liu2022generative}, the authors considered generative model-based PCA (GPCA) and proposed a practical projected power (PPower) method. Furthermore, they showed that provided a good initial guess, PPower converges linearly to an estimated vector with the optimal statistical error. However, as pointed out in~\cite{tan2018sparse} for the case of sparse priors, due to the singular matrix $\hat{\bB}$ of the GEP, the corresponding methods for PCA generally cannot be directly applied to solve GEPs. Therefore, our work, which provides a unified treatment to GEP under generative priors, broadens the scope of~\cite{liu2022generative}. 

A detailed discussion on the technical novelty of this work compared to~\cite{liu2022generative} and~\cite{tan2018sparse} is provided in Section~\ref{sec:discussion}.

\subsection{Contributions}
\label{sec:contr}

The main contributions of this paper can be summarized as follows:
\begin{itemize}
    \item We provide theoretical guarantees regarding the optimal solutions to the GGEP in Eq.~\eqref{eq:opt_ggep}. Particularly, we demonstrate that under appropriate conditions, the distance between any optimal solution to Eq.~\eqref{eq:opt_ggep} and the underlying signal is roughly of order $O(\sqrt{(k\log L)/m})$, assuming that the generative model $G$ is $L$-Lipschitz continuous with bounded $k$-dimensional inputs. Such a statistical rate is naturally conjectured to be optimal based on the information-theoretic lower bound established in \cite{liu2022generative} for the simpler GPCA problem.
    \item We propose an iterative approach to approximately solve the non-convex optimization problem in Eq.~\eqref{eq:opt_ggep}, which we refer to as the projected Rayleigh flow method (PRFM). We show that PRFM converges linearly to a point achieving the optimal statistical rate under suitable assumptions.
    \item We have conducted simple proof-of-concept experiments to demonstrate the effectiveness of the proposed projected Rayleigh flow method.
\end{itemize}

\subsection{Notation}
\label{sec:notation}

We use upper and lower case boldface letters to denote matrices and vectors, respectively. We write $[N]=\{1,2,\cdots,N\}$ for a positive integer $N$, and we use $\bI_N$ to denote the identity matrix in $\bbR^{N\times N}$. A {\em generative model} is a function $G \,:\, \calD\to \bbR^n$, with latent dimension $k$, ambient dimension $n$, and input domain $\calD \subseteq \bbR^k$. We focus on the setting where $k \ll n$. For a set $S \subseteq \bbR^k$ and a generative model $G \,:\,\bbR^k \to \bbR^n$, we write $G(S) = \{ G(\bz) \,:\, \bz \in S  \}$. We define the radius-$r$ ball in $\bbR^k$ as $B^k(r):=\{\bz \in \bbR^k: \|\bz\|_2 \le r\}$. In addition, we use $\calR(G):=G(B^k(r))$ to denote the range of $G$. We use $\|\bX\|_{2 \to 2}$ to denote the spectral norm of a matrix $\bX$. We use $\lambda_{\min}(\bX)$ and $\lambda_{\max}(\bX)$ to denote the minimum and maximum eigenvalues of $\bX$ respectively. $\calS^{n-1} := \{\bx \in \bbR^n: \|\bx\|_2=1\}$ represents the unit sphere in $\bbR^n$. The symbols $C$, $C'$, and $C''$ are absolute constants whose values may differ from line to line. We use standard Landau symbols for asymptotic notations, with the implied constants in $\Omega(\cdot)$ terms being implicitly assumed to be sufficiently large and the implied constants in $O(\cdot)$ terms being implicitly assumed to be sufficiently small.

\section{Preliminary}
\label{sec:prel}

In this section, we formally introduce the problem and overview the main assumptions that we adopt. Additionally, we discuss the technical distinctions of this work in comparison with the two closely related works~\cite{tan2018sparse} and~\cite{liu2022generative}. In Appendix~\ref{app:inst_GGEPs}, we show examples of popular statistical and machine-learning problems that can be expressed as GGEP.

\subsection{Setup and Main Assumptions}
\label{sec:setup_assumps}

Recall that we assume that we have access to a matrix pair $(\hat{\bA}, \hat{\bB})$ in $\bbR^{n \times n}$ constructed from $m$ independent observations, satisfying $\hat{\bA} = \bA +\bE, \quad \hat{\bB} = \bB +\bF$ (see Eq.~\eqref{eq:tilde_bAbB}). Here, $\bE$ and $\bF$ are the unknown perturbation matrices, $(\bA, \bB)$ is the underlying matrix pair with $\bA$ symmetric and $\bB$ positive definite, and the generalized eigenvalues are $\lambda_i$ with corresponding generalized eigenvectors $\bv_i$ respectively (see Eq.~\eqref{eq:def_gep}). The generalized eigenvalues are ordered such that $\lambda_1 \ge \lambda_2 \ge \ldots \ge \lambda_n$. For brevity, we fix the norms of the $\bv_i$ such that $\bv_i^\top \bB \bv_i = 1$ (and $\bv_i^\top \bB \bv_j = 0$ for $i \ne j$). In addition, we assume that we have an $L$-Lipschitz continuous generative model $G\,:\, \bbR^k \to \bbR^n$ with $k \ll n$ and the radius $r > 0$. Based on this model formulation for GEP and the generative model, we further make the following assumptions.

First, we assume that the generative model $G$ has bounded inputs in $\bbR^k$ and it is normalized in the sense that its range is contained in the unit sphere of $\bbR^n$. 
\begin{assumption}\label{assp:normG}
    For the sake of convenience, we assume that the generative model $G$ maps from $B^k(r)$ to $\calS^{n-1}$ for some radius $r > 0$. 
\end{assumption}
\begin{remark}
         Our results can be readily extended to general unnormalized generative models by considering their normalized variants. More specifically, according to~\cite{bora2017compressed}, for an $\ell$-layer fully connected feed-forward neural network from $\bbR^k$ to $\bbR^n$ with popular activation functions such as ReLU and sigmoid, typically, it is $L$-Lipschitz continuous with $L = n^{\Theta(\ell)}$. In addition, as mentioned in~\cite{liu2020sample,liu2022generative}, for a general unnormalized $L$-Lipschitz continuous generative model $G$, we can consider a corresponding normalized generative model $\tilde{G}\,:\, \calD \to \calS^{n-1}$ with $\calD := \{\bz \in B_2^k(r)\,:\, \|G(\bz)\|_2 > R_{\min}\}$ for some $R_{\min} >0$, and $\tilde{G}(\bz) = G(\bz)/\|G(\bz)\|_2$. Then, we can set $R_{\min}$ to be as small as $1/ n^{\Theta(\ell)}$ and $r$ to be as large as $n^{\Theta(\ell)}$ without altering the scaling laws, which renders the dependence on $R_{\min}$ and $r$ very mild.
    \end{remark} 
    
Next, we impose assumptions on the generalized eigenvalues and the leading generalized eigenvector.
\begin{assumption}\label{assp:generalized_eigens}
    We suppose that the largest generalized eigenvalue is strictly greater than the second largest generalized eigenvalue, i.e., $\lambda_1 > \lambda_2$. Furthermore, we assume that the normalized leading generalized eigenvector $\bv^* := \bv_1/\|\bv_1\|_2$ lies in the range of the generative model $G$, meaning $\bv^* \in \calR(G) \subseteq \calS^{n-1}$. 
\end{assumption}

We also make appropriate assumptions on the perturbation matrices $\bE$ and $\bF$. In particular, following~\cite[Proposition~2]{tan2018sparse} and~\cite[Assumption~2]{liu2022generative}, we make the following assumption for the perturbation matrices. Similarly to that mentioned in~\cite{tan2018sparse}, it is easy to verify that under generative priors, for various statistical models that can be formulated as a GEP such as CCA, FDA, and SDR (discussed in Appendix~\ref{app:inst_GGEPs}), Assumption~\ref{assp:pert} will be satisfied with high probability. Additionally, similarly to the proof for the spiked covariance model in~\cite[Appendix B.1]{liu2022generative}, it can be readily demonstrated that for the $(\hat{\mathbf{A}}, \hat{\mathbf{B}})$ constructed in Section~\ref{sec:res_GGEPs}, the corresponding perturbation matrices $\mathbf{E}$ and $\mathbf{F}$ satisfy Assumption~\ref{assp:pert} with high probability.
\begin{assumption}\label{assp:pert}
    Let $S_1, S_2$ be two finite sets in $\bbR^n$ satisfying $m=\Omega(\log(|S_1|\cdot |S_2|))$. Then, for all $\bs_1\in S_1$ and $\bs_2 \in S_2$, we have
    \begin{align}
        \left|\bs_1^\top\bE\bs_2\right| \le C\sqrt{\frac{\log(|S_1|\cdot |S_2|)}{m}} \cdot \|\bs_1\|_2\cdot \|\bs_2\|_2, \quad \left|\bs_1^\top\bF\bs_2\right| \le C\sqrt{\frac{\log(|S_1|\cdot |S_2|)}{m}} \cdot \|\bs_1\|_2\cdot \|\bs_2\|_2,
    \end{align}
    where $C$ is an absolute constant. Moreover, we have $\|\bE\|_{2\to2} = O(n/m)$ and $\|\bF\|_{2\to2} = O(n/m)$.
\end{assumption}
\begin{remark}\label{remark:crawford_num}
    Assumption~\ref{assp:pert} is closely related to classic assumptions about the Crawford number of the symmetric-definite matrix pair $(\bA,\bB)$~\cite{tan2018sparse,cai2021note}. More specifically, we have 
    \begin{align}
        \mathrm{cr}(\bA, \bB) := \min_{\bu\in\calS^{n-1}} \sqrt{(\bu^\top\bA\bu)^2 + (\bu^\top\bB\bu)^2} \ge \lambda_{\min}(\bB) >0. \label{eq:def_cr_num}
    \end{align}
    Additionally, based on the $L$-Lipschitz continuity of $G$ and~\cite[Lemma~5.2]{vershynin2010introduction}, for any $\delta >0$, there exists a $\delta$-net $M'$ of $\calR(G)$ such that $\log |M'| \le k \log\frac{4 Lr}{\delta}$. Note that $M' \subseteq \calR(G) \subseteq \calS^{n-1}$. Then, if 
    \begin{align}
        \epsilon(M') := \sqrt{\rho(\bE, M')^2 + \rho(\bF, M')^2},\label{eq:def_epsilonEF}
    \end{align}
    with 
    \begin{align}
        \rho(\bE, M') := \sup_{\bs_1 \in M',\bs_2 \in M'} |\bs_1^\top \bE \bs_2|, \quad \rho(\bF, M') := \sup_{\bs_1 \in M',\bs_2 \in M'} |\bs_1^\top \bF \bs_2|,
    \end{align} 
    it follows from Assumption~\ref{assp:pert} that
    \begin{equation}
        \epsilon(M') \le 2C \sqrt{\frac{k\log\frac{4Lr}{\delta}}{m}}. \label{eq:def_epsilonEF_bd}
    \end{equation}
    Therefore, if $m = \Omega\big(k\log\frac{4Lr}{\delta}\big)$ with a sufficiently large implied constant, the conditions that 
    \begin{equation}
        \frac{\epsilon(M')}{\mathrm{cr}(\bA, \bB)} \le c \quad \text{and} \quad \rho(\bF, M') \le c' \lambda_{\min}(\bB)\label{eq:crawford_assump}
    \end{equation}
    naturally hold, where $c, c'$ are certain positive constants.\footnote{Note that although we use notations such as $\mathrm{cr}(\bA, \bB)$ and $\lambda_{\min}(\bB)$, both $\bA$ and $\bB$ are considered fixed matrices, and for the ease of presentation, we omit the dependence on them for relevant positive constants.} This leads to an assumption similar to~\cite[Assumption~1]{tan2018sparse}.
\end{remark}

\subsection{Discussion on the Technical Novelty Compared to~\cite{liu2022generative} and~\cite{tan2018sparse}}
\label{sec:discussion}

Our analysis builds on techniques from existing works such as~\cite{liu2022generative} and~\cite{tan2018sparse}, but these techniques are combined and extended in a non-trivial manner; we highlight some examples as follows:

\begin{itemize}
\item We present a recovery guarantee regarding the global optimal solutions of GGEP in Theorem~\ref{thm:main_thm1}. Such guarantees have not been provided for SGEP in~\cite{tan2018sparse}. Although in~\cite[Theorem~1]{liu2022generative}, the authors established a recovery guarantee regarding the global optimal solutions of GPCA, its proof is significantly simpler than ours and does not involve handling singular $\hat{\mathbf{B}}$ or using the property of projection as in Eq.~\eqref{eq:proj_imp_eq} in our proof.

\item The convergence guarantee for Rifle in~\cite{tan2018sparse} hinges on the generalized eigenvalue decomposition of $(\hat{\mathbf{A}}_F, \hat{\mathbf{B}}_F)$, where $F$ is a superset of the support of the underlying sparse signal, along with their Lemma 4, which follows directly from~\cite[Lemma 12]{yuan2013truncated} and characterizes the error induced by the truncation step. Additionally, the convergence guarantee of PPower in~\cite{liu2022generative} only involves bounding the term related to the sample covariance matrix $\mathbf{V}$ (as seen in their Eq. (83)). In contrast, in our proof of Theorem~\ref{thm:main_thm2}, we make use of the generalized eigenvalue decomposition of $(\mathbf{A}, \mathbf{B})$, and we require the employment of significantly distinct techniques to manage the projection step (and bound a term involving both $\hat{\mathbf{A}}$ and $\hat{\mathbf{B}}$, see our Eq.~\eqref{eq:discussion_imp_eq2}). This is evidenced in the three auxiliary lemmas we introduced in Appendix B.1 and the appropriate manipulation of the first and second terms on the right-hand side of Eq.~\eqref{eq:allTerms}.

    \item Unlike in GPCA studied in~\cite{liu2022generative}, where the underlying signal can be readily assumed to be a unit vector that is (approximately) within the range of the normalized generative model, in the formulation of the optimization problem for GGEP (refer to Eq.~\eqref{eq:opt_ggep} and the corresponding analysis, we need to carefully address the distinct normalization requirements of the generative model and $\hat{\mathbf{B}}$.
\end{itemize}

\section{Main Results}
\label{sec:main_thms}



Firstly, we present the following theorem, which pertains to the recovery guarantees in relation to the globally optimal solution of the GGEP in Eq.~\eqref{eq:opt_ggep}. Similar to~\cite[Theorem~1]{liu2022generative}, Theorem~\ref{thm:main_thm1} can be easily extended to the case where there is representation error, i.e., the underlying signal $\bv^* \notin \calR(G)$. Here, we focus on the case where $\bv^* \in \calR(G)$ ({\em cf.}~Assumption~\ref{assp:generalized_eigens}) to avoid non-essential complications. The proof of Theorem~\ref{thm:main_thm1} is deferred to Appendix~\ref{app:proof_main_thm1}. 

\begin{theorem}\label{thm:main_thm1}
    Suppose that Assumptions~\ref{assp:normG},~\ref{assp:generalized_eigens}, and~\ref{assp:pert} hold for the GEP and generative model $G$. Let $\hat{\bu}$ be a globally optimal solution to Eq.~\eqref{eq:opt_ggep} for GGEP. Then, for any $\delta \in (0,1)$ satisfying $\delta = O\big((k \log \frac{4Lr}{\delta})/n\big)$, when $m = \Omega\big(k \log \frac{4Lr}{\delta}\big)$, we have
    \begin{equation}
        \min \{\|\hat{\bu} - \bv^*\|_2, \|\hat{\bu} + \bv^*\|_2\} \le C_1 \cdot \sqrt{\frac{k\log\frac{4Lr}{\delta}}{m}},\label{eq:thm1_ub}
    \end{equation}
    where $C_1$ is a positive constant depending on $\bA$ and $\bB$. 
\end{theorem}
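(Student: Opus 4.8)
The plan is to run a ``basic inequality + eigen‑gap curvature + self‑bounding'' argument. Write $\veps:=\sqrt{(k\log\frac{4Lr}{\delta})/m}$ and fix a $\delta$-net $M'$ of $\calR(G)$ with $\log|M'|\le k\log\frac{4Lr}{\delta}$, adjoining $\bv^*$ to $M'$. By Assumption~\ref{assp:pert} all quadratic forms of $\bE$ and $\bF$ on $M'$ are $O(\veps)$, and $\|\bE\|_{2\to2},\|\bF\|_{2\to2}=O(n/m)$; since $\hat{\bu},\bv^*\in\calR(G)\subseteq\calS^{n-1}$, replacing them (or their sums and differences) by nearby net points costs only $O(\delta\cdot n/m)=O\big((k\log\tfrac{4Lr}{\delta})/m\big)=O(\veps^2)$ under the hypothesis $\delta=O((k\log\tfrac{4Lr}{\delta})/n)$. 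For $m=\Omega(k\log\tfrac{4Lr}{\delta})$ with a large implied constant, $\hat{\bu}^\top\hat{\bB}\hat{\bu}=\hat{\bu}^\top\bB\hat{\bu}+\hat{\bu}^\top\bF\hat{\bu}\ge\tfrac12\lambda_{\min}(\bB)>0$ and likewise $\bv^{*\top}\hat{\bB}\bv^*\ge\tfrac12\lambda_{\min}(\bB)$, so all Rayleigh quotients below are well defined.

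First I would extract a basic inequality. With $\rho:=\bv^{*\top}\hat{\bA}\bv^*/\bv^{*\top}\hat{\bB}\bv^*$, optimality of $\hat{\bu}$ over $\calR(G)\ni\bv^*$ gives $\hat{\bu}^\top(\hat{\bA}-\rho\hat{\bB})\hat{\bu}\ge0$, while $\bv^{*\top}(\hat{\bA}-\rho\hat{\bB})\bv^*=0$ by the choice of $\rho$. Picking $\bv^\sharp\in\{\bv^*,-\bv^*\}$ with $\hat{\bu}^\top\bv^\sharp\ge0$ (so $D:=\min\{\|\hat{\bu}-\bv^*\|_2,\|\hat{\bu}+\bv^*\|_2\}=\|\hat{\bu}-\bv^\sharp\|_2$ and $\|\hat{\bu}+\bv^\sharp\|_2\le2$), symmetry of $\hat{\bA}-\rho\hat{\bB}$ and $\bv^{\sharp\top}(\hat{\bA}-\rho\hat{\bB})\bv^\sharp=0$ yield
\[ 0\le\hat{\bu}^\top(\hat{\bA}-\rho\hat{\bB})\hat{\bu}-\bv^{\sharp\top}(\hat{\bA}-\rho\hat{\bB})\bv^\sharp=(\hat{\bu}+\bv^\sharp)^\top(\hat{\bA}-\rho\hat{\bB})(\hat{\bu}-\bv^\sharp). \]
Next, split $\hat{\bA}-\rho\hat{\bB}=(\bA-\lambda_1\bB)+(\lambda_1-\rho)\bB+(\bE-\rho\bF)$. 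Since $\bA\bv_1=\lambda_1\bB\bv_1$ and $\bv_1^\top\bB\bv_1=1$ force $\bv^{\sharp\top}(\lambda_1\bB-\bA)\bv^\sharp=0$, the $(\bA-\lambda_1\bB)$-part contributes exactly $-\hat{\bu}^\top(\lambda_1\bB-\bA)\hat{\bu}$, so
\[ \hat{\bu}^\top(\lambda_1\bB-\bA)\hat{\bu}\le(\lambda_1-\rho)\big(\hat{\bu}^\top\bB\hat{\bu}-\bv^{*\top}\bB\bv^*\big)+(\hat{\bu}+\bv^\sharp)^\top(\bE-\rho\bF)(\hat{\bu}-\bv^\sharp). \]
Both terms on the right are $O(\veps D)+O(\veps^2)$: from $\bv^{*\top}(\lambda_1\bB-\bA)\bv^*=0$ one gets $\lambda_1-\rho=-\bv^{*\top}(\bE-\lambda_1\bF)\bv^*/(\bv^{*\top}\hat{\bB}\bv^*)=O(\veps)$ (whence also $|\rho|=O(1)$), and $\hat{\bu}^\top\bB\hat{\bu}-\bv^{*\top}\bB\bv^*=(\hat{\bu}+\bv^\sharp)^\top\bB(\hat{\bu}-\bv^\sharp)=O(D)$ because $\hat{\bu},\bv^*$ are unit vectors; for the last term I keep the bilinear structure, approximate $\hat{\bu}\pm\bv^\sharp$ by $\delta$-nets of the secant sets $\calR(G)\pm(\calR(G)\cup(-\calR(G)))$ (of log-cardinality $O(k\log\tfrac{4Lr}{\delta})$) and invoke the bilinear bound of Assumption~\ref{assp:pert}, getting $O\big(\veps\,\|\hat{\bu}-\bv^\sharp\|_2\|\hat{\bu}+\bv^\sharp\|_2\big)+O(\veps^2)=O(\veps D)+O(\veps^2)$. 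Hence $\hat{\bu}^\top(\lambda_1\bB-\bA)\hat{\bu}\le C_2\veps D+C_2\veps^2$.

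Then I would lower-bound the same quantity by curvature. With $\bM:=\bB^{-1/2}\bA\bB^{-1/2}=\sum_i\lambda_i\bw_i\bw_i^\top$, $\bw_i:=\bB^{1/2}\bv_i$ orthonormal, and $\bx:=\bB^{1/2}\hat{\bu}$, we have $\hat{\bu}^\top(\lambda_1\bB-\bA)\hat{\bu}=\bx^\top(\lambda_1\bI-\bM)\bx\ge(\lambda_1-\lambda_2)\|P_{\bw_1^\perp}\bx\|_2^2$, using $\lambda_1>\lambda_2$. Since $\|\bx\|_2\ge\lambda_{\min}(\bB)^{1/2}$, since $\sin\angle(\bB^{1/2}\bu,\bB^{1/2}\bu')\ge\kappa(\bB)^{-1/2}\sin\angle(\bu,\bu')$ (the map $\bB^{1/2}$ distorts angles between lines by at most $\kappa(\bB)^{1/2}$), and since $\bB^{1/2}\bv^*\parallel\bw_1$, we get $\|P_{\bw_1^\perp}\bx\|_2=\|\bx\|_2\sin\angle(\bx,\bw_1)\ge\lambda_{\min}(\bB)^{1/2}\kappa(\bB)^{-1/2}\sin\angle(\hat{\bu},\bv^*)$, while $D^2\le2\sin^2\angle(\hat{\bu},\bv^*)$ as $\hat{\bu},\bv^*$ are unit vectors. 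Thus $\hat{\bu}^\top(\lambda_1\bB-\bA)\hat{\bu}\ge c_0D^2$ with $c_0:=\lambda_{\min}(\bB)(\lambda_1-\lambda_2)/(2\kappa(\bB))$. Combining, $c_0D^2\le C_2\veps D+C_2\veps^2$; solving this quadratic inequality in $D$ gives $D\le C_1\veps=C_1\sqrt{(k\log\tfrac{4Lr}{\delta})/m}$ with $C_1$ depending only on $\bA,\bB$, which is \eqref{eq:thm1_ub}.

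The step I expect to be the main obstacle is obtaining a self‑improving bound rather than a ``square‑root loss'': a crude estimate $\lambda_1-R(\hat{\bu})=O(\veps)$ combined with curvature would only give $D=O(\sqrt{\veps})$. The optimal rate forces one to (i) keep every perturbation contribution as a bilinear form in $\hat{\bu}\pm\bv^\sharp$ and net the secant set, so that the inequality closes in $D$, and (ii) observe that the discrepancy $\lambda_1-\rho$ produced by the singular $\hat{\bB}$ always multiplies $\hat{\bu}^\top\bB\hat{\bu}-\bv^{*\top}\bB\bv^*$, which is $O(D)$ precisely because the generative model is unit‑normalized while $\hat{\bB}$ is not — the normalization mismatch emphasized in Section~\ref{sec:discussion}. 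The remaining care lies in tracking the $\kappa(\bB)$‑type factors when passing between the Euclidean and $\bB$‑inner‑product geometries.
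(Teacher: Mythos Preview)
Your proposal is correct and follows the same high-level scheme as the paper (basic inequality from optimality, eigen-gap curvature in the $\bB$-geometry, bilinear netting of the perturbations, and a self-bounding quadratic in $D$), but the concrete decomposition you choose is genuinely different and somewhat cleaner. The paper works with the \emph{true} Rayleigh quotient gap $\lambda_1-\frac{\hat{\bu}^\top\bA\hat{\bu}}{\hat{\bu}^\top\bB\hat{\bu}}$, cross-multiplies to a single numerator $((\bv^*)^\top\bA\bv^*)(\hat{\bu}^\top\bB\hat{\bu})-(\hat{\bu}^\top\bA\hat{\bu})((\bv^*)^\top\bB\bv^*)$, then expands into three mixed product terms involving $\bE,\bF,\hat{\bA},\hat{\bB}$, each bounded separately by a net argument; the curvature side is handled by controlling $\sum_{i\ge2}g_i^2$ and then invoking a projection identity $\|\hat{\bu}-(\hat{\bu}^\top\bv^*)\bv^*\|_2\le\|\hat{\bu}-\sqrt{\sum g_i^2}\,\bv_1\|_2$ to pass back to $\min\{\|\hat{\bu}\pm\bv^*\|_2\}$ (this projection step is flagged in Section~\ref{sec:discussion} as a key point). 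You instead fix $\rho=R_{\hat{\bA},\hat{\bB}}(\bv^*)$ and split the pencil additively as $(\bA-\lambda_1\bB)+(\lambda_1-\rho)\bB+(\bE-\rho\bF)$, which isolates the curvature term $\hat{\bu}^\top(\lambda_1\bB-\bA)\hat{\bu}$ in one shot and leaves only two perturbation pieces; the $\bB$-geometry is handled by the angle-distortion bound for $\bB^{1/2}$ rather than the projection identity. Your route avoids the cross-multiplication bookkeeping and the three-term splitting, at the cost of having to observe that $\lambda_1-\rho=O(\veps)$ always appears multiplied by $\hat{\bu}^\top\bB\hat{\bu}-\bv^{*\top}\bB\bv^*=O(D)$---precisely the normalization point you flag at the end. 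Both arguments land on the same quadratic $c_0D^2\le C\veps D+C\veps^2$ and hence the same rate; the paper's version makes the dependence on $\sum_{i\ge2}g_i^2$ explicit, while yours keeps everything phrased in terms of $D$ throughout.
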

As noted in~\cite{bora2017compressed}, an $\ell$-layer neural network generative model is typically $L$-Lipschitz continuous with $L = n^{\Theta(\ell)}$. Then, under the typical scaling of $L = n^{\Theta(\ell)}$, $r = n^{\Theta(\ell)}$, and $\delta = 1/n^{\Theta(\ell)}$, the upper bound in Eq.~\eqref{eq:thm1_ub} is of order $O(\sqrt{(k\log L)/m})$, which is naturally conjectured to be optimal based on the algorithm-independent lower bound provided in \cite{liu2022generative} for the simpler GPCA problem.


Although Theorem~\ref{thm:main_thm1} demonstrates that the estimator for the GGEP in Eq.~\eqref{eq:opt_ggep} achieves the optimal statistical rate, in general, the optimization problem is highly non-convex, and obtaining the optimal solution is not feasible. To address this issue, we propose an iterative approach that can be regarded as a generative counterpart of the truncated Rayleigh flow method proposed in~\cite{tan2018sparse}. This approach is used to find an estimated vector that approximates a globally optimal solution to Eq.~\eqref{eq:opt_ggep}. The corresponding algorithm, which we refer to as the projected Rayleigh flow method (PRFM), is presented in Algorithm~\ref{algo:prfm}.

\begin{algorithm}[t]
\caption{Projected Rayleigh Flow Method (\texttt{PRFM})}
\label{algo:prfm}
{\bf Input}: $\hat{\bA}$, $\hat{\bB}$, $G$, number of iterations $T$, step size $\eta >0$, initial vector $\bu_0$ \\
\textbf{for} $t = 0,1,\ldots,T-1$ \textbf{do}
\begin{align}
    \rho_t &= \frac{\bu_t^\top \hat{\bA}\bu_t}{\bu_t^\top \hat{\bB}\bu_t}, \label{eq:calc_rhot}\\
    \bu_{t+1} & = \calP_G\left(\bu_t + \eta (\hat{\bA} - \rho_t \hat{\bB}) \bu_t\right), \label{eq:ga_proj}
\end{align}
\textbf{end for} \\
{\bf Output}: $\bu_{T}$ 
\end{algorithm}

In the iterative process of Algorithm~\ref{algo:prfm}, the following steps are performed:
\begin{itemize}
    \item Calculate $\rho_t$ in Eq.~\eqref{eq:calc_rhot} to approximate the largest generalized eigenvalue $\lambda_1$. Note that from Lemma~\ref{lem:imp_icml} in Appendix~\ref{app:thm1_lemmas}, we obtain $\bu_t^\top \hat{\bB}\bu_t >0$ for $t>0$ under appropriate conditions.
    \item In Eq.~\ref{eq:ga_proj}, we perform a gradient ascent operation and a projection operation onto the range of the generative model, where $\calP_G(\cdot)$ denotes the projection function. This step is essentially analogous to the corresponding step in~\cite[Algorithm~1]{tan2018sparse}. However, instead of seeking the support for the sparse signal, our aim is to project onto the range of the generative model. Furthermore, we adopt a simpler choice for the step size $\eta$ in the gradient ascent operation.\footnote{In~\cite[Algorithm~1]{tan2018sparse}, the step size is $\eta/\rho_t$, which depends on $t$ and is approximately equal to $\eta/\lambda_1$.} 
\end{itemize} 
\begin{remark}
Specifically, for any $\bu \in \bbR^n$, $\calP_G(\bu) \in \arg \min_{\bw \in \calR(G)} \|\bw - \bu\|_2$. We will assume implicitly that the projection step can be performed accurately, as in~\cite{shah2018solving,peng2020solving,liu2022generative}, for the convenience of theoretical analysis. In practice, however, approximate methods might be necessary, such as gradient descent \cite{shah2018solving} or GAN-based projection methods \cite{raj2019gan}. 
\end{remark}

We establish the following convergence guarantee for Algorithm~\ref{algo:prfm}. The proof of Theorem~\ref{thm:main_thm2} can be found in Appendix~\ref{app:proof_main_thm2}. 

\begin{theorem}\label{thm:main_thm2}
    Suppose that Assumptions~\ref{assp:normG},~\ref{assp:generalized_eigens}, and~\ref{assp:pert} hold for the GEP and generative model $G$. Let $\gamma_1 = \eta(\lambda_1 -\lambda_2)\lambda_{\min}(\bB)$ and $\gamma_2 = \eta (\lambda_1-\lambda_n)\lambda_{\max}(\bB)$. Suppose that 
    \begin{equation}\label{eq:imp_cond0_thm2}
        \gamma_1 + \gamma_2 < 2,
    \end{equation}
    and $\nu_0 := \bu_0^\top \bv^* > 0$ satisfies the condition that
    \begin{equation}\label{eq:imp_cond_thm2}
        \frac{b_0 + \sqrt{(1-c_0)c_0}}{1-c_0} < 1,
    \end{equation}
    where
    \begin{align}
     b_0 &= (2-(\gamma_1 +\gamma_2)) + \gamma_1 (2\kappa(\bB)-(1+\nu_0))  + 3\gamma_2 \kappa(\bB) \sqrt{2(1-\nu_0)},\quad c_0 = \frac{\gamma_2 -\gamma_1}{2}.
    \end{align}
Then, for any $\delta \in (0,1)$ satisfying $\delta = O\big((k \log \frac{4Lr}{\delta})/n\big)$, when $m = \Omega\big(k \log \frac{4Lr}{\delta}\big)$, there exists a positive integer $T_0 = O\big(\log\big(\frac{m}{k\log\frac{Lr}{\delta}}\big)\big)$, such that the sequence $\{\|\bu_t-\bv^*\|_2\}_{t \le T_0}$ is monotonically decreasing, with the following inequality holds for all $t \le T_0$: 
\begin{align}
    \|\bu_{t}-\bv^*\|_2 &\le \left(\frac{b_0 + \sqrt{(1-c_0)c_0}}{1-c_0}\right)^t \cdot \|\bu_{0}-\bv^*\|_2 + C_2 \sqrt{\frac{k \log \frac{4Lr}{\delta}}{m}},
\end{align}
where $C_2$ is a positive constant depending on $\bA$, $\bB$, and $\eta$. 
Additionally, we have for all $t \ge T_0$ that 
\begin{equation}
    \|\bu_{t}-\bv^*\|_2 \le C_2 \sqrt{\frac{k \log \frac{4Lr}{\delta}}{m}}.
\end{equation}
\end{theorem}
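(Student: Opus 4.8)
\emph{Reduction to a one-step estimate.} First I would record two structural facts. Since $\calR(G)\subseteq\calS^{n-1}$, for any $\bx\neq\bzero$ one has $\calP_G(\bx)=\calP_G(\bx/\|\bx\|_2)$; and because $\bv^*\in\calR(G)$ we have $\|\calP_G(\bx)-\bx/\|\bx\|_2\|_2\le\|\bv^*-\bx/\|\bx\|_2\|_2$, hence by the triangle inequality $\|\calP_G(\bx)-\bv^*\|_2\le 2\,\|\bx/\|\bx\|_2-\bv^*\|_2$ (the projection property flagged as Eq.~\eqref{eq:proj_imp_eq}). Second, $\bv^*=\bv_1/\|\bv_1\|_2$ is a critical point of the scale-invariant Rayleigh quotient $\bu\mapsto \bu^\top\bA\bu/\bu^\top\bB\bu$, with value $\lambda_1$, so $(\bA-\lambda_1\bB)\bv^*=\bzero$; moreover $\lambda_1$ is the global maximum of that quotient, so $\lambda_1-\bu^\top\bA\bu/\bu^\top\bB\bu\ge 0$ for all $\bu$. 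Writing $\bx_{t+1}:=\bu_t+\eta(\hat\bA-\rho_t\hat\bB)\bu_t$ for the pre-projection iterate, these facts reduce the theorem to a single-step estimate
\[
\Big\|\tfrac{\bx_{t+1}}{\|\bx_{t+1}\|_2}-\bv^*\Big\|_2\ \le\ \tfrac12\,\alpha\,\|\bu_t-\bv^*\|_2\ +\ C\sqrt{\tfrac{k\log(4Lr/\delta)}{m}},\qquad \alpha=\frac{b_0+\sqrt{(1-c_0)c_0}}{1-c_0},
\]
together with $\bu_t^\top\hat\bB\bu_t>0$ (Lemma~\ref{lem:imp_icml}) so that $\rho_t$ is well defined; the theorem follows by unrolling.

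\emph{Peeling off the noise.} Next I would separate the perturbation, $\bx_{t+1}=\bu_t+\eta(\bA-\rho_t\bB)\bu_t+\eta(\bE-\rho_t\bF)\bu_t$, and split $\rho_t=\rho_t^{\mathrm{pop}}+(\rho_t-\rho_t^{\mathrm{pop}})$ with $\rho_t^{\mathrm{pop}}=\bu_t^\top\bA\bu_t/\bu_t^\top\bB\bu_t$. Exactly as in Remark~\ref{remark:crawford_num}, I would take a $\delta$-net $M'$ of $\calR(G)$ with $\log|M'|\le k\log(4Lr/\delta)$, use Assumption~\ref{assp:pert} to control all bilinear forms $\bs_1^\top\bE\bs_2,\bs_1^\top\bF\bs_2$ over $M'$, and use $\|\bE\|_{2\to2},\|\bF\|_{2\to2}=O(n/m)$ together with $\delta=O\big((k\log(4Lr/\delta))/n\big)$ to absorb the off-net residuals. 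Since $\bu_t\in\calR(G)$ for $t\ge1$, $\bv^*\in\calR(G)$, and the directions $\hat\bB\bu_t,\bB\bu_t$ are bounded, this shows $|\rho_t-\rho_t^{\mathrm{pop}}|$ and $\|\eta(\bE-\rho_t\bF)\bu_t\|_2$ are both $O(\sqrt{k\log(4Lr/\delta)/m})$, i.e.\ the noise produces only the additive statistical term above; the remainder is deterministic.

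\emph{The deterministic core.} The heart of the argument is to analyze the map $\bu_t\mapsto \bx^{\mathrm{pop}}_{t+1}:=\bu_t+\eta(\bA-\rho^{\mathrm{pop}}_t\bB)\bu_t$ near $\bv^*$. Here I would expand $\bu_t=\sum_i a_i\bv_i$ in the $\bB$-orthonormal generalized eigenbasis ($\bv_i^\top\bB\bv_j=\delta_{ij}$, $\bA\bv_i=\lambda_i\bB\bv_i$) and combine three ingredients: (a) $(\bv^*)^\top(\bA-\rho^{\mathrm{pop}}_t\bB)=(\lambda_1-\rho^{\mathrm{pop}}_t)(\bv^*)^\top\bB$, so the $\bv^*$-component of $\bx^{\mathrm{pop}}_{t+1}$ is at least $\nu_t$ because $\lambda_1-\rho^{\mathrm{pop}}_t\ge 0$ and $(\bv^*)^\top\bB\bu_t>0$ under the maintained closeness; (b) the eigengap bound $\lambda_1-\rho^{\mathrm{pop}}_t\ge(\lambda_1-\lambda_2)\big(\sum_{i\ge2}a_i^2\big)/\big(\sum_i a_i^2\big)$, where $\sum_{i\ge2}a_i^2$ and $\sum_i a_i^2=\bu_t^\top\bB\bu_t$ are comparable to $1-\nu_t^2$ up to $\lambda_{\min}(\bB),\lambda_{\max}(\bB)$ factors; and (c) a matching upper bound on $\|\bx^{\mathrm{pop}}_{t+1}\|_2$, again through $\bB$-reweighted quadratic sums in the $(1+\eta(\lambda_i-\rho^{\mathrm{pop}}_t))$. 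Combining (a)--(c) while tracking the mismatch between the Euclidean normalization $\|\bu_t\|_2=\|\bv^*\|_2=1$ and the $\bB$-normalization of the $\bv_i$ — which is exactly where $\kappa(\bB)$, $\gamma_1=\eta(\lambda_1-\lambda_2)\lambda_{\min}(\bB)$, $\gamma_2=\eta(\lambda_1-\lambda_n)\lambda_{\max}(\bB)$ enter — and using $\gamma_1+\gamma_2<2$ (Eq.~\eqref{eq:imp_cond0_thm2}) to keep $\bI+\eta(\bA-\rho^{\mathrm{pop}}_t\bB)$ from overshooting, yields an upper bound on $\|\bx^{\mathrm{pop}}_{t+1}/\|\bx^{\mathrm{pop}}_{t+1}\|_2-\bv^*\|_2$ in terms of $b_0,c_0$ evaluated at $\nu_t$; Eq.~\eqref{eq:imp_cond_thm2} is precisely the requirement that the resulting one-step factor is $<1$.

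\emph{Unrolling, and the main obstacle.} Finally I would close the induction: since $\nu_t$ is nondecreasing (which follows from the one-step estimate together with the smallness of the noise), the one-step factor, monotone in $\nu_t$, stays $\le\alpha$ for all $t$, so $\|\bu_{t+1}-\bv^*\|_2\le\alpha\|\bu_t-\bv^*\|_2+C_2'\sqrt{k\log(4Lr/\delta)/m}$; unrolling gives the geometric-plus-floor bound, $\|\bu_t-\bv^*\|_2$ decreases monotonically while it exceeds a fixed multiple of the statistical term, and $T_0=O\big(\log(m/(k\log(Lr/\delta)))\big)$ is the number of steps for $\alpha^t\|\bu_0-\bv^*\|_2$ to fall below that multiple. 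I expect the main obstacle to be ingredient (c) of the deterministic core and its interaction with (a)--(b): bounding $\|\bx^{\mathrm{pop}}_{t+1}\|_2$ and the off-$\bv^*$ part of $\bx^{\mathrm{pop}}_{t+1}$ at once when the generalized eigenbasis is not Euclidean-orthonormal, together with the cross term involving both $\hat\bA$ and $\hat\bB$ (Eq.~\eqref{eq:discussion_imp_eq2}) — this is what necessitates the three auxiliary lemmas and yields the non-obvious rate $(b_0+\sqrt{(1-c_0)c_0})/(1-c_0)$.
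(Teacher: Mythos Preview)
Your plan diverges from the paper at the very first structural step --- the handling of the projection --- and this difference is what prevents you from recovering the stated rate $(b_0+\sqrt{(1-c_0)c_0})/(1-c_0)$.

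The paper does \emph{not} normalize $\tilde{\bu}_{t+1}:=\bu_t+\eta(\hat\bA-\rho_t\hat\bB)\bu_t$ before applying the projection inequality. It uses $\|\tilde{\bu}_{t+1}-\bu_{t+1}\|_2\le\|\tilde{\bu}_{t+1}-\bv^*\|_2$ directly on the \emph{unnormalized} point and expands to obtain
\[
\|\bh_{t+1}\|_2^2\ \le\ 2\langle \tilde{\bu}_{t+1}-\bv^*,\ \bh_{t+1}\rangle
\ =\ 2\langle \bh_t-\eta(\rho_t\hat\bB-\hat\bA)\bu_t,\ \bh_{t+1}\rangle,
\]
which is exactly Eq.~\eqref{eq:discussion_imp_eq2}. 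This inner-product coupling between $\bh_{t+1}$ and the pre-projection error is the crux: after anchoring at $\lambda_1$ (not at $\rho_t^{\mathrm{pop}}$) and applying Lemma~\ref{lem:basic_bds1} and Lemma~\ref{lem:upper_alpha1Md}, one lands on a \emph{quadratic} inequality in $\|\bh_{t+1}\|_2$,
\[
(1-c_0)\|\bh_{t+1}\|_2^2\ \le\ \big(b_0\|\bh_t\|_2+\text{noise}\big)\|\bh_{t+1}\|_2\ +\ c_0\|\bh_t\|_2^2\ +\ \text{noise},
\]
and the rate $(b_0+\sqrt{(1-c_0)c_0})/(1-c_0)$ is precisely the (simplified) positive root of this quadratic. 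Your normalization-plus-triangle-inequality route, $\|\bu_{t+1}-\bv^*\|_2\le 2\|\bx_{t+1}/\|\bx_{t+1}\|_2-\bv^*\|_2$, \emph{decouples} $\bh_{t+1}$ from the right-hand side entirely: you would then need a direct bound $\|\bx_{t+1}/\|\bx_{t+1}\|_2-\bv^*\|_2\le \tfrac12\alpha\,\|\bh_t\|_2$, and nothing in your deterministic-core sketch explains how the $\sqrt{(1-c_0)c_0}$ term --- a pure artifact of the quadratic formula --- would emerge from a ratio/normalization analysis. So as written, your plan cannot reproduce the theorem's stated contraction factor.

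Two smaller but related points. First, the paper does not split $\rho_t=\rho_t^{\mathrm{pop}}+(\rho_t-\rho_t^{\mathrm{pop}})$; it instead writes $\lambda_1\bB-\rho_t\hat\bB=(\lambda_1-\rho_t)\bB+\rho_t(\bB-\hat\bB)$ and bounds $\lambda_1-\rho_t$ by roughly $(\lambda_1-\lambda_n)\kappa(\bB)\|\bh_t\|_2^2$ plus noise (Eq.~\eqref{eq:lambda1_rhot_final}). After using the induction hypothesis $\|\bh_t\|_2\le\|\bh_0\|_2=\sqrt{2(1-\nu_0)}$, this is exactly where the $3\gamma_2\kappa(\bB)\sqrt{2(1-\nu_0)}$ piece of $b_0$ comes from; your population/noise split would give a different constant. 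Second, Eq.~\eqref{eq:proj_imp_eq} that you cite is from the proof of Theorem~\ref{thm:main_thm1} and concerns orthogonal projection onto $\mathrm{span}\{\bv^*\}$, not $\calP_G$; the projection device actually used here is the inner-product inequality above.
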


Theorem~\ref{thm:main_thm2} establishes the conditions under which Algorithm~\ref{algo:prfm} converges linearly to a point that achieves the statistical rate of order $O(\sqrt{(k\log L)/m})$. 

\begin{remark}
Both inequalities in Eqs.~\eqref{eq:imp_cond0_thm2} and~\eqref{eq:imp_cond_thm2} can be satisfied under appropriate conditions. More specifically, first, under suitable conditions on the underlying matrix pair $(\bA,\bB)$ and the step size $\eta$, the condition in~\eqref{eq:imp_cond0_thm2} can hold. Moreover, using of the inequality that $2\sqrt{(1-c)c} \le 1$ for any $c \in [0,1)$, we obtain that when 
\begin{align}
    \frac{2b_0 + 1}{2(1-c_0)} <1,
\end{align}
or equivalently,
\begin{equation}
    \gamma_2 + 3\gamma_1 -  2\gamma_1 (2\kappa(\bB)-(1+\nu_0)) - 6\gamma_2 \kappa(\bB) \sqrt{2(1-\nu_0)} >3, \label{eq:imp_condMod_thm2}
\end{equation}
the condition in Eq.~\eqref{eq:imp_cond_thm2} holds. Then, for example, when $\kappa(\bB) := \lambda_{\max}(\bB)/\lambda_{\min}(\bB)$ is close to $1$, and $\nu_0$ is close to $1$,~\eqref{eq:imp_condMod_thm2} can be approximately simplified as
\begin{equation}
    \gamma_2 + 3\gamma_1 > 3.\label{eq:imp_condModApprox_thm2}
\end{equation}
Note that both the conditions $\gamma_1 +\gamma_2 < 2$ (in Eq.~\eqref{eq:imp_cond0_thm2}) and $\gamma_2 + 3\gamma_1 > 3$ can be satisfied for appropriate $\gamma_1$ and $\gamma_2$ (under suitable conditions for $\bA$, $\bB$, and $\eta$), say $\gamma_1 = \frac{2}{3}$ and $\gamma_2 = 1.1$. 
\end{remark}
\begin{remark}
In certain practical scenarios, we may assume that the data only contains non-negative vectors, for example, in the case of image datasets. Additionally, during pre-training, we can set the activation function of the final layer of the neural network generative model to be a non-negative function, such as ReLU or sigmoid, further restricting the range of the generative model to the non-negative orthant. Therefore, the assumption that $\nu_0 := \bu_0^\top \bv^* > 0$ is mild (in experiments, we simply set the initial vector $\bu_0$ to be $\bu_0 = [1,1,\ldots,1]^\top/\sqrt{n} \in \bbR^n$), and similar assumptions have been made in prior works such as~\cite{liu2022generative}. As a result, we provide an upper bound on $\|\bu_{t}-\bv^*\|_2$ instead of on $\min\{\|\bu_{t}-\bv^*\|_2, \|\bu_{t}+\bv^*\|_2\}$. 
\end{remark}

\section{Experiments}
\label{sec:exps}

In this section, we conduct proof-of-concept numerical experiments on the MNIST dataset~\cite{lecun1998gradient} to showcase the effectiveness of the proposed Algorithm~\ref{algo:prfm}. Additional results for MNIST and CelebA~\cite{liu2015deep} are provided in Appendices~\ref{app:add_mnist} and~\ref{app:exp_celeba}.\footnote{For the CelebA dataset, which is publicly accessible and widely used (containing face images of celebrities), we point out the potential ethical problems in Appendix~\ref{app:exp_celeba}.} We note that these experiments are intended as a basic proof of concept rather than an exhaustive study, as our contributions are primarily theoretical in nature. 

\subsection{Experiment Setup}
\label{sec:setup}

The MNIST dataset consists of $60,000$ images of handwritten digits, each measuring $28\times 28$ pixels, resulting in an ambient dimension of $n =784$. We choose a pre-trained variational autoencoder (VAE) model as the generative model $G$ for the MNIST dataset, with a latent dimension of $k =20$. Both the encoder and decoder of the VAE are fully connected neural networks with two hidden layers, having an architecture of $20-500-500-784$. The VAE is trained using the Adam optimizer with a mini-batch size of 100 and a learning rate of 0.001 on the original MNIST training set. To approximately perform the projection step $\calP_G(\cdot)$, we use a gradient descent method with the Adam optimizer, with a step size of $100$ and a learning rate of $0.1$. This approximation method has been used in several previous works, including~\cite{shah2018solving, peng2020solving, liu2020sample, liu2022generative}. The reconstruction task is evaluated on a random subset of $10$ images drawn from the testing set of the MNIST dataset, which is unseen by the pre-trained generative model.

We compare our Algorithm~\ref{algo:prfm} (denoted as~\texttt{PRFM}) with the projected power method (denoted as~\texttt{PPower}) proposed in~\cite{liu2022generative} and the Rifle method (e.g., denoted as~\texttt{Rifle20} when the cardinality parameter is set to $20$) proposed in~\cite{tan2018sparse}. 

To evaluate the performance of different algorithms, we employ the Cosine Similarity metric, which is calculated as $CosSim\big(\bv^*,\tilde{\bu}) = \tilde{\bu}^\top\bv^*$. Here, $\bv^*$ is the target signal that is contained in the unit sphere, and $\tilde{\bu}$ denotes the normalized output vector of each algorithm. To mitigate the effect of local minima, we perform $10$ random restarts and select the best result from these restarts. The average Cosine Similarity is calculated over the $10$ test images and the $10$ restarts. All experiments are carried out using Python 3.10.6 and PyTorch 2.0.0, with an NVIDIA RTX 3060 Laptop 6GB GPU.

\subsection{Results for GGEPs}
\label{sec:res_GGEPs}

Firstly, we adhere to the experimental setup employed in~\cite{cai2021note}, where the underlying matrices $\bA$ and $\bB$ are set to be $\bA = 4\bv^* (\bv^*)^\top + \bI_n$ and $\bB =\bI_n$ respectively. We sample $m$ data points following $\calN(\bm{0},\bA)$, and another $m$ data points following $\calN(\bm{0},\bB)$. The approximate matrices $\hat{\bA}$ and $\hat{\bB}$ are constructed based on the sample covariances correspondingly. Specifically, we set
\begin{align}
    \hat{\bA} = \frac{1}{m}\sum_{i=1}^m (2\gamma_i\bv^* + \bz_i) (2\gamma_i\bv^* + \bz_i)^\top, \quad \hat{\bB} = \frac{1}{m}\sum_{i=1}^m \bw_i \bw_i^\top.\label{eq:nonID_tildeB}
\end{align}
Here, $\gamma_i$ are independently and identically distributed (i.i.d.) realizations of the standard normal distribution $\calN(0,1)$, $\bz_i$ are i.i.d. realizations of $\calN(\bm{0},\bI_n)$, and $\bw_i$ are also i.i.d. realizations of $\calN(\bm{0},\bI_n)$. $\alpha_i$, $\bz_i$, and $\bw_i$ are independently generated. The leading generalized eigenvector $\bv^*$ is set to be the normalized version of the test image vector. Note that in this case, the generalized eigenvalues are $\lambda_1 = 5$ and $\lambda_2 =\ldots =\lambda_n = 1$. To ensure that the conditions in Eqs.~\eqref{eq:imp_cond0_thm2} and~\eqref{eq:imp_condModApprox_thm2} are both satisfied, we set the step size $\eta$ for~\texttt{PRFM} to be $\eta = \frac{7}{32}$. The step size $\eta'$ for \texttt{Rifle} is set to $35/32$ such that $\eta'/\rho_{t-1} \approx 7/32 =\eta$. For all the methods, the initialization vector $\bu_0$ for is set to be the normalized vector of all ones, namely $\bu_0 = [1,1,\ldots,1]^\top/\sqrt{n} \in \bbR^n$, which naturally guarantees that $\nu_0 = \bu_0^\top \bv^* > 0$ since the image vectors in the MNIST dataset contain only non-negative entries. For~\texttt{PPower}, the input matrix set to be $\hat{\bA}$ (ignoring the fact that $\hat{\bB}$ is not exactly the identity matrix; see~\cite[Algorithm~1]{liu2022generative}). We vary the number of measurements $m$ in $\{100, 150, 200, 250, 300, 350\}$. 

Figure~\ref{fig:images_nonID_tildeB}(a) show the reconstructed images with different numbers of measurements. Additionally, Figure~\ref{fig:quant_res}(a) presents the quantitative comparison results based on the Cosine Similarity metric. From these figures, we can see that when the number of measurements $m$ is relatively small compared to the ambient dimension $n$, sparsity-based methods~\texttt{Rifle20} and~\texttt{Rifle100} lead to poor reconstructions, and~\texttt{PPower} and~\texttt{PRFM} can achieve reasonably good reconstructions. Moreover, \texttt{PRFM} generally yields significantly better results than \texttt{PPower}. This is not surprising as \texttt{PPower} is not compatible with the case where $\hat{\bB}$ is not the identity matrix.

 \begin{figure}
\hspace{-0.5cm}
\begin{tabular}{cc}
\includegraphics[height=0.25\textwidth]{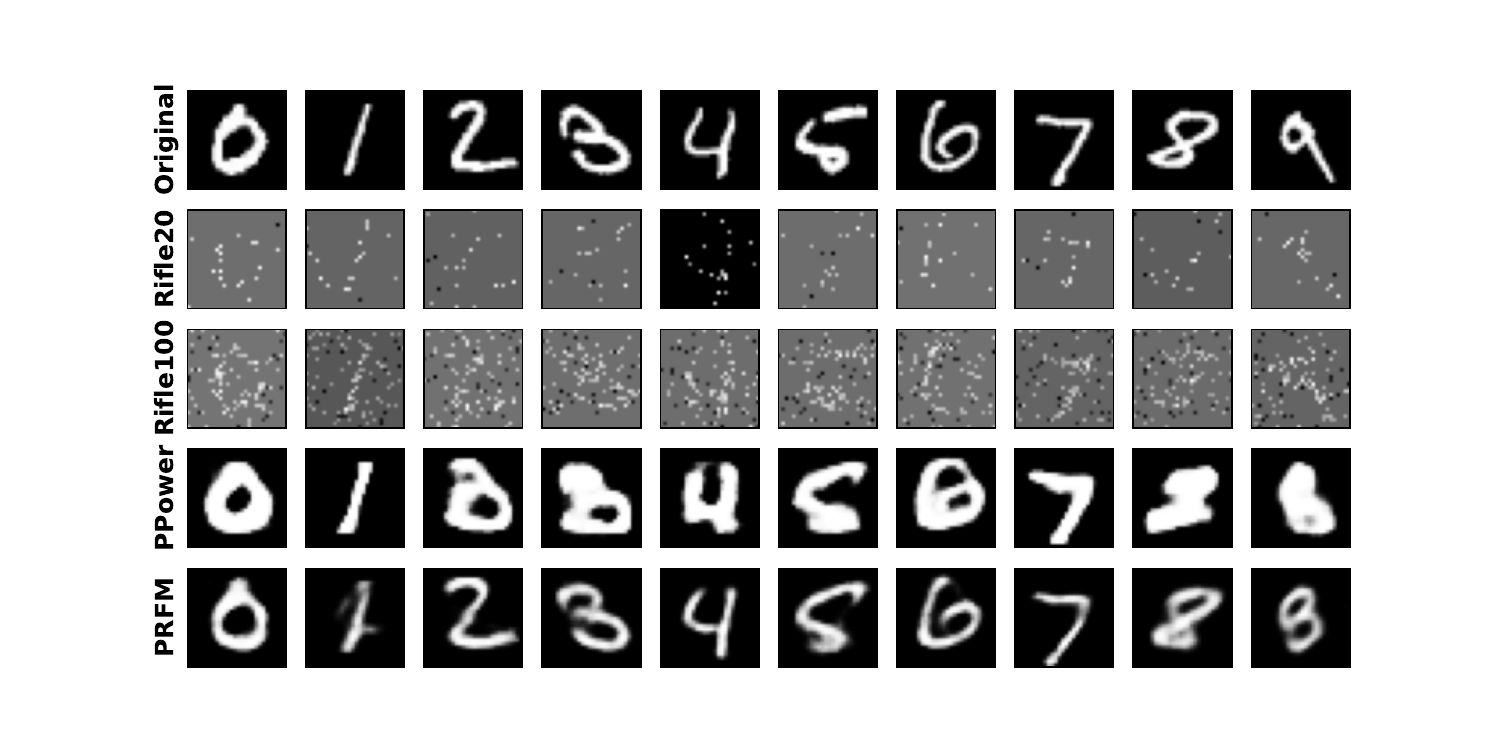} & \hspace{-0.5cm}
\includegraphics[height=0.25\textwidth]{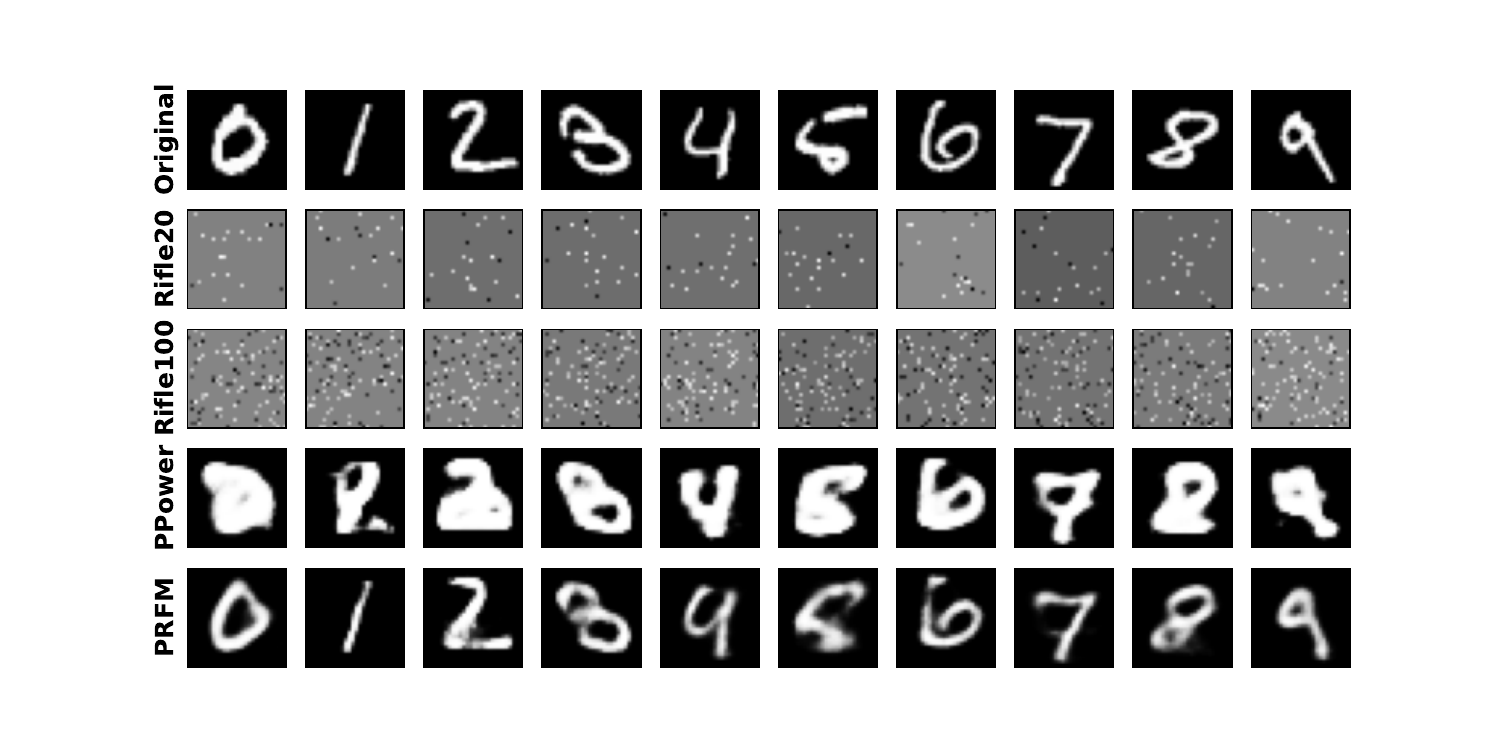} \\
{\small (a) Eq.~\eqref{eq:nonID_tildeB} with $m = 150$} & \hspace{-0.5cm} {\small (b) Eq.~\eqref{eq:PR_tildeAB} with $m = 300$} 
\end{tabular}
\caption{Reconstructed images of the MNIST dataset for $(\hat{\bA},\hat{\bB})$ generated from Eqs.~\eqref{eq:nonID_tildeB} and~\eqref{eq:PR_tildeAB}.} \label{fig:images_nonID_tildeB} 
\end{figure} 

 \begin{figure}
\begin{tabular}{cc}
\includegraphics[height=0.35\textwidth]{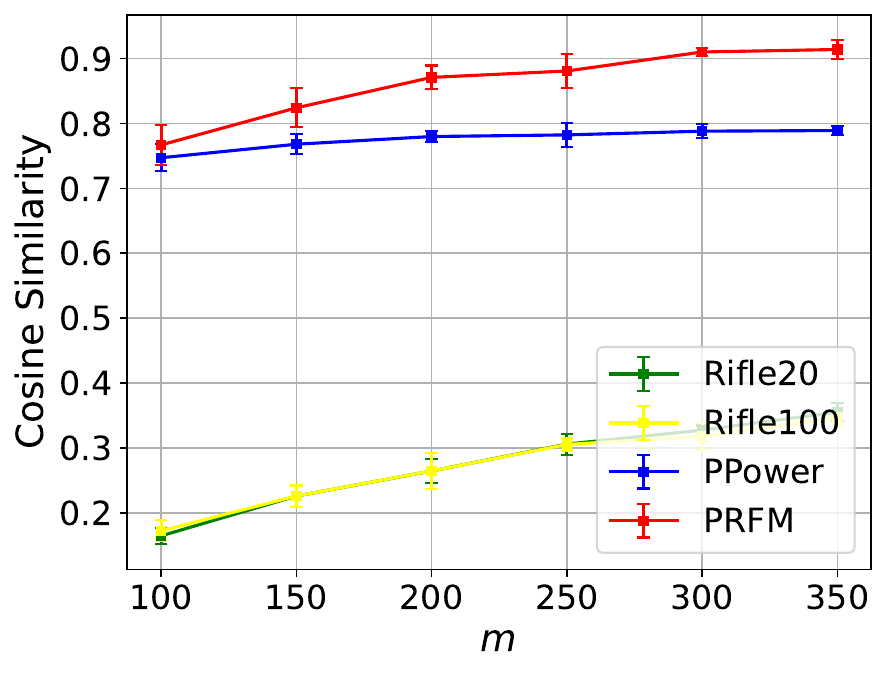} & 
\includegraphics[height=0.35\textwidth]{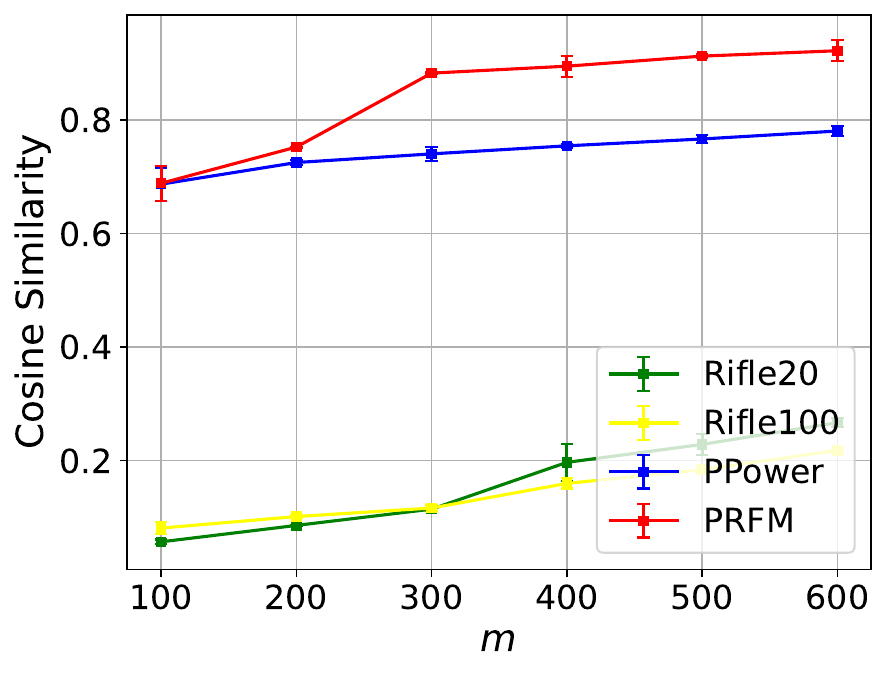} \\
{\small (a) $\hat{\bA}$ in Eq.~\eqref{eq:nonID_tildeB}} &  {\small (b) $\hat{\bA}$ in Eq.~\eqref{eq:PR_tildeAB}} 
\end{tabular}
\caption{Quantitative results of the performance of the methods on MNIST.} \label{fig:quant_res} 
\end{figure} 

Next, we also investigate the case where 
\begin{align}
    \hat{\bA} = \frac{1}{m}\sum_{i=1}^m y_i \bg_i \bg_i^\top, \quad \hat{\bB} = \frac{1}{m}\sum_{i=1}^m \bw_i \bw_i^\top,\label{eq:PR_tildeAB}
\end{align}
where $\bg_i \in \bbR^n$ are independent standard Gaussian vectors and $y_i = (\bg_i^\top \bv^*)^2$. The generation of $\hat{\bA}$ corresponds to the phase retrieval model. Other experimental settings remain the same as those in the previous case where $\hat{\bA}$ and $\hat{\bB}$ are generated from Eq.~\eqref{eq:nonID_tildeB}. The corresponding numerical results are presented in Figures~\ref{fig:images_nonID_tildeB}(b) and~\ref{fig:quant_res}(b). From these figures, we can see that~\texttt{PRFM} also leads to best reconstructions in this case.

\section{Conclusion and Future Work}
\label{sec:concl}

GEP encompasses numerous significant eigenvalue problems, motivating this paper's examination of GEP using generative priors, referred to as Generative GEP and GGEP for brevity. Specifically, we assume that the desired signal lies within the range of a specific pre-trained Lipschitz generative model. Unlike prior works that typically addressed high-dimensional settings through sparsity, the generative prior enables the accurate characterization of more intricate signals. We have demonstrated that the exact solver of GGEP attains the optimal statistical rate. Furthermore, we have devised a computational method that converges linearly to a point achieving the optimal rate under appropriate assumptions. Experimental results are provided to demonstrate the efficacy of our method.

In the current work, for the sake of theoretical analysis, we adhere to the assumption made in prior works like~\cite{shah2018solving,peng2020solving,liu2022generative} that the projection onto the range of the generative model can be executed accurately. However, in experiments, this projection step can only be approximated, consuming a substantial portion of the running time of our proposed method. Developing highly efficient projection methods with theoretical guarantees is of both theoretical and practical interest. Another intriguing area for future research is to provide guarantees for estimating multiple generalized eigenvectors (or the subspace they span) under generative modeling assumptions.

\begin{ack}
 Z. Liu and W. Li were supported by the National Natural Science Foundation of China (No. 62176047), the Sichuan Science and Technology Program (No. 2022YFS0600), Sichuan Natural Science Foundation (No. 2024NSFTD0041), and the Fundamental Research Funds for the Central Universities Grant (No. ZYGX2021YGLH208). J. Chen was supported by a Hong Kong PhD Fellowship from the Hong Kong Research Grants Council (RGC).
\end{ack}

\bibliography{main}

\begin{thebibliography}{10}

\bibitem{asteris2011sparse}
Megasthenis Asteris, Dimitris~S Papailiopoulos, and George~N Karystinos.
\newblock Sparse principal component of a rank-deficient matrix.
\newblock In {\em ISIT}, 2011.

\bibitem{aubin2020exact}
Benjamin Aubin, Bruno Loureiro, Antoine Baker, et~al.
\newblock Exact asymptotics for phase retrieval and compressed sensing with random generative priors.
\newblock In {\em MSML}, 2020.

\bibitem{aubin2019spiked}
Benjamin Aubin, Bruno Loureiro, Antoine Maillard, Florent Krzakala, and Lenka Zdeborov{\'a}.
\newblock The spiked matrix model with generative priors.
\newblock In {\em NeurIPS}, 2019.

\bibitem{bora2017compressed}
Ashish Bora, Ajil Jalal, Eric Price, and Alexandros~G Dimakis.
\newblock Compressed sensing using generative models.
\newblock In {\em ICML}, 2017.

\bibitem{brown2000discriminant}
Michael~T Brown and Lori~R Wicker.
\newblock Discriminant analysis.
\newblock In {\em Handbook of Applied Multivariate Statistics and Mathematical Modeling}, pages 209--235. 2000.

\bibitem{cai2013sparse}
T~Tony Cai, Zongming Ma, and Yihong Wu.
\newblock Sparse {PCA}: Optimal rates and adaptive estimation.
\newblock {\em Ann. Stat.}, 41(6):3074--3110, 2013.

\bibitem{cai2011direct}
Tony Cai and Weidong Liu.
\newblock A direct estimation approach to sparse linear discriminant analysis.
\newblock {\em J. Am. Stat. Assoc.}, 106(496):1566--1577, 2011.

\bibitem{cai2021note}
Yunfeng Cai, Guanhua Fang, and Ping Li.
\newblock A note on sparse generalized eigenvalue problem.
\newblock In {\em NeurIPS}, 2021.

\bibitem{cai2020inverse}
Yunfeng Cai and Ping Li.
\newblock An inverse-free truncated {R}ayleigh-{R}itz method for sparse generalized eigenvalue problem.
\newblock In {\em AISTATS}, 2020.

\bibitem{camps2019rational}
Daan Camps, Karl Meerbergen, and Raf Vandebril.
\newblock A rational qz method.
\newblock {\em SIAM Journal on Matrix Analysis and Applications}, 40(3):943--972, 2019.

\bibitem{chang2016convex}
Xiaojun Chang, Feiping Nie, Yi~Yang, et~al.
\newblock Convex sparse {PCA} for unsupervised feature learning.
\newblock {\em ACM T. Knowl. Discov. D.}, 11(1):1--16, 2016.

\bibitem{chen2023unified}
Junren Chen, Jonathan Scarlett, Michael Ng, and Zhaoqiang Liu.
\newblock A unified framework for uniform signal recovery in nonlinear generative compressed sensing.
\newblock In {\em NeurIPS}, 2023.

\bibitem{chen2013sparse}
Mengjie Chen, Chao Gao, Zhao Ren, and Harrison~H Zhou.
\newblock Sparse {CCA} via precision adjusted iterative thresholding.
\newblock {\em https://arxiv.org/1311.6186}, 2013.

\bibitem{chu2013sparse}
Delin Chu, Li-Zhi Liao, Michael~K Ng, and Xiaowei Zhang.
\newblock Sparse canonical correlation analysis: {N}ew formulation and algorithm.
\newblock {\em IEEE Trans. Pattern Anal. Mach. Intell.}, 35(12):3050--3065, 2013.

\bibitem{clemmensen2011sparse}
Line Clemmensen, Trevor Hastie, Daniela Witten, and Bjarne Ersb{\o}ll.
\newblock Sparse discriminant analysis.
\newblock {\em Technometrics}, 53(4):406--413, 2011.

\bibitem{cocola2020nonasymptotic}
Jorio Cocola, Paul Hand, and Vlad Voroninski.
\newblock Nonasymptotic guarantees for spiked matrix recovery with generative priors.
\newblock In {\em NeurIPS}, 2020.

\bibitem{cocola2021no}
Jorio Cocola, Paul Hand, and Vladislav Voroninski.
\newblock No statistical-computational gap in spiked matrix models with generative network priors.
\newblock {\em Entropy}, 23(1):115, 2021.

\bibitem{d2007direct}
Alexandre d'Aspremont, Laurent El~Ghaoui, Michael~I Jordan, and Gert~RG Lanckriet.
\newblock A direct formulation for sparse {PCA} using semidefinite programming.
\newblock {\em SIAM Rev.}, 49(3):434--448, 2007.

\bibitem{dax2003orthogonal}
Achiya Dax.
\newblock The orthogonal rayleigh quotient iteration (orqi) method.
\newblock {\em Linear algebra and its applications}, 358(1-3):23--43, 2003.

\bibitem{gao2017sparse}
CHAO GAO, ZONGMING MA, and HARRISON~H ZHOU.
\newblock Sparse {CCA}: {A}daptive estimation and computational barriers.
\newblock {\em Ann. Stat.}, 45(5):2074--2101, 2017.

\bibitem{ge2016efficient}
Rong Ge, Chi Jin, Praneeth Netrapalli, Aaron Sidford, et~al.
\newblock Efficient algorithms for large-scale generalized eigenvector computation and canonical correlation analysis.
\newblock In {\em ICML}, 2016.

\bibitem{ghojogh2019eigenvalue}
Benyamin Ghojogh, Fakhri Karray, and Mark Crowley.
\newblock Eigenvalue and generalized eigenvalue problems: {T}utorial.
\newblock {\em https://arxiv.org/1903.11240}, 2019.

\bibitem{golub2013matrix}
Gene~H Golub and Charles~F Van~Loan.
\newblock {\em Matrix computations}.
\newblock JHU press, 2013.

\bibitem{guo2007regularized}
Yaqian Guo, Trevor Hastie, and Robert Tibshirani.
\newblock Regularized linear discriminant analysis and its application in microarrays.
\newblock {\em Biostatistics}, 8(1):86--100, 2007.

\bibitem{hafid2019generalized}
Abdeslem Hafid~Bentbib, Ahmed Kanber, and Kamal Lachhab.
\newblock Generalized subspace iteration method for solving matrix pair eigenproblem.
\newblock {\em Journal of Mathematical Modeling}, 7(3):337--355, 2019.

\bibitem{hand2018phase}
Paul Hand, Oscar Leong, and Vlad Voroninski.
\newblock Phase retrieval under a generative prior.
\newblock In {\em NeurIPS}, 2018.

\bibitem{hardoon2011sparse}
David~R Hardoon and John Shawe-Taylor.
\newblock Sparse canonical correlation analysis.
\newblock {\em Mach. Learn.}, 83:331--353, 2011.

\bibitem{hardoon2004canonical}
David~R Hardoon, Sandor Szedmak, and John Shawe-Taylor.
\newblock Canonical correlation analysis: {A}n overview with application to learning methods.
\newblock {\em Neural Comput.}, 16(12):2639--2664, 2004.

\bibitem{heckel2019deep}
Reinhard Heckel and Paul Hand.
\newblock Deep decoder: {C}oncise image representations from untrained non-convolutional networks.
\newblock In {\em ICLR}, 2019.

\bibitem{hein2010inverse}
Matthias Hein and Thomas B{\"u}hler.
\newblock An inverse power method for nonlinear eigenproblems with applications in $1$-spectral clustering and sparse {PCA}.
\newblock In {\em NeurIPS}, 2010.

\bibitem{hu2023privacy}
Lijie Hu, Zihang Xiang, Jiabin Liu, and Di~Wang.
\newblock Privacy-preserving sparse generalized eigenvalue problem.
\newblock In {\em AISTATS}, 2023.

\bibitem{hwang2015projection}
Seong~Jae Hwang, Maxwell~D Collins, Sathya~N Ravi, Vamsi~K Ithapu, Nagesh Adluru, Sterling~C Johnson, and Vikas Singh.
\newblock A projection free method for generalized eigenvalue problem with a nonsmooth regularizer.
\newblock In {\em ICCV}, 2015.

\bibitem{hyder2019alternating}
Rakib Hyder, Viraj Shah, Chinmay Hegde, and M~Salman Asif.
\newblock Alternating phase projected gradient descent with generative priors for solving compressive phase retrieval.
\newblock In {\em ICASSP}, 2019.

\bibitem{jagatap2019algorithmic}
Gauri Jagatap and Chinmay Hegde.
\newblock Algorithmic guarantees for inverse imaging with untrained network priors.
\newblock In {\em NeurIPS}, 2019.

\bibitem{jalal2021instance}
A~Jalal, S~Karmalkar, A~Dimakis, and E~Price.
\newblock Instance-optimal compressed sensing via posterior sampling.
\newblock In {\em ICML}, 2021.

\bibitem{jalal2020robust}
Ajil Jalal, Liu Liu, Alexandros~G Dimakis, and Constantine Caramanis.
\newblock Robust compressed sensing using generative models.
\newblock In {\em NeurIPS}, 2020.

\bibitem{jolliffe1986}
Ian~T. Jolliffe.
\newblock {\em Principal component analysis}.
\newblock Springer-Verlag, 1986.

\bibitem{joseph2020one}
Geethu Joseph, Swatantra Kafle, and Pramod~K Varshney.
\newblock One-bit compressed sensing using generative models.
\newblock In {\em ICASSP}, 2020.

\bibitem{journee2010generalized}
Michel Journ{\'e}e, Yurii Nesterov, Peter Richt{\'a}rik, and Rodolphe Sepulchre.
\newblock Generalized power method for sparse principal component analysis.
\newblock {\em J. Mach. Learn. Res.}, 11(2), 2010.

\bibitem{jung2019penalized}
Sungkyu Jung, Jeongyoun Ahn, and Yongho Jeon.
\newblock Penalized orthogonal iteration for sparse estimation of generalized eigenvalue problem.
\newblock {\em J. Comput. Graph Stat.}, 28(3):710--721, 2019.

\bibitem{kafle2021one}
Swatantra Kafle, Geethu Joseph, and Pramod~K Varshney.
\newblock One-bit compressed sensing using untrained network prior.
\newblock In {\em ICASSP}, 2021.

\bibitem{kamath2020power}
Akshay Kamath, Sushrut Karmalkar, and Eric Price.
\newblock On the power of compressed sensing with generative models.
\newblock In {\em ICML}, 2020.

\bibitem{kaufman1977some}
Linda Kaufman.
\newblock Some thoughts on the qz algorithm for solving the generalized eigenvalue problem.
\newblock {\em ACM Transactions on Mathematical Software (TOMS)}, 3(1):65--75, 1977.

\bibitem{kolar2014optimal}
Mladen Kolar and Han Liu.
\newblock Optimal feature selection in high-dimensional discriminant analysis.
\newblock {\em IEEE Trans. Inf. Theory}, 61(2):1063--1083, 2014.

\bibitem{kressner2001numerical}
Daniel Kressner.
\newblock Numerical methods for structured matrix factorizations.
\newblock 2001.

\bibitem{kuleshov2013fast}
Volodymyr Kuleshov.
\newblock Fast algorithms for sparse principal component analysis based on {R}ayleigh quotient iteration.
\newblock In {\em ICML}, 2013.

\bibitem{lecun1998gradient}
Yann LeCun, L{\'e}on Bottou, Yoshua Bengio, and Patrick Haffner.
\newblock Gradient-based learning applied to document recognition.
\newblock {\em Proc. IEEE}, 86(11):2278--2324, 1998.

\bibitem{liu2019error}
Zhaoqiang Liu.
\newblock Error bounds for spectral clustering over samples from spherical gaussian mixture models.
\newblock In {\em ICASSP}, 2019.

\bibitem{liu2021towards}
Zhaoqiang Liu, Subhroshekhar Ghosh, and Jonathan Scarlett.
\newblock Towards sample-optimal compressive phase retrieval with sparse and generative priors.
\newblock In {\em NeurIPS}, 2021.

\bibitem{liu2020sample}
Zhaoqiang Liu, Selwyn Gomes, Avtansh Tiwari, and Jonathan Scarlett.
\newblock Sample complexity bounds for $1$-bit compressive sensing and binary stable embeddings with generative priors.
\newblock In {\em ICML}, 2020.

\bibitem{liu2022generative}
Zhaoqiang Liu, Jiulong Liu, Subhroshekhar Ghosh, Jun Han, and Jonathan Scarlett.
\newblock Generative principal component analysis.
\newblock In {\em ICLR}, 2022.

\bibitem{liu2020generalized}
Zhaoqiang Liu and Jonathan Scarlett.
\newblock The generalized {L}asso with nonlinear observations and generative priors.
\newblock In {\em NeurIPS}, 2020.

\bibitem{liu2019informativeness}
Zhaoqiang Liu and Vincent~YF Tan.
\newblock The informativeness of $k$-means for learning mixture models.
\newblock {\em IEEE Transactions on Information Theory}, 65(11):7460--7479, 2019.

\bibitem{liu2022misspecified}
Zhaoqiang Liu, Xinshao Wang, and Jiulong Liu.
\newblock Misspecified phase retrieval with generative priors.
\newblock In {\em NeurIPS}, 2022.

\bibitem{liu2015deep}
Ziwei Liu, Ping Luo, Xiaogang Wang, and Xiaoou Tang.
\newblock Deep learning face attributes in the wild.
\newblock In {\em ICCV}, 2015.

\bibitem{longsine1980simultaneous}
DE~Longsine and SF~McCormick.
\newblock Simultaneous rayleigh-quotient minimization methods for ax= $\lambda$bx.
\newblock {\em Linear Algebra and its Applications}, 34:195--234, 1980.

\bibitem{lykou2010sparse}
Anastasia Lykou and Joe Whittaker.
\newblock Sparse {CCA} using a {L}asso with positivity constraints.
\newblock {\em Comput. Stat. Data Anal.}, 54(12):3144--3157, 2010.

\bibitem{ma2013sparse}
Zongming Ma.
\newblock Sparse principal component analysis and iterative thresholding.
\newblock {\em Ann. Stat.}, 41(2):772--801, 2013.

\bibitem{mackey2008deflation}
Lester~W Mackey.
\newblock Deflation methods for sparse {PCA}.
\newblock In {\em NeurIPS}, 2008.

\bibitem{mai2012direct}
Qing Mai, Hui Zou, and Ming Yuan.
\newblock A direct approach to sparse discriminant analysis in ultra-high dimensions.
\newblock {\em Biometrika}, 99(1):29--42, 2012.

\bibitem{mika1999fisher}
Sebastian Mika, Gunnar Ratsch, Jason Weston, Bernhard Scholkopf, and Klaus-Robert Mullers.
\newblock Fisher discriminant analysis with kernels.
\newblock In {\em Neural Networks for Signal Processing IX}, pages 41--48, 1999.

\bibitem{moghaddam2008sparse}
Baback Moghaddam, Amit Gruber, Yair Weiss, and Shai Avidan.
\newblock Sparse regression as a sparse eigenvalue problem.
\newblock In {\em ITA}, 2008.

\bibitem{moghaddam2006spectral}
Baback Moghaddam, Yair Weiss, and Shai Avidan.
\newblock Spectral bounds for sparse {PCA}: Exact and greedy algorithms.
\newblock In {\em NeurIPS}, 2006.

\bibitem{moler1973algorithm}
Cleve~B Moler and Gilbert~W Stewart.
\newblock An algorithm for generalized matrix eigenvalue problems.
\newblock {\em SIAM Journal on Numerical Analysis}, 10(2):241--256, 1973.

\bibitem{nguyen2021provable}
Thanh~V Nguyen, Gauri Jagatap, and Chinmay Hegde.
\newblock Provable compressed sensing with generative priors via {L}angevin dynamics.
\newblock {\em https://arxiv.org/2102.12643}, 2021.

\bibitem{nikpour2005generalizations}
Maziar Nikpour, Knut Huper, and Jonathan~H Manton.
\newblock Generalizations of the rayleigh quotient iteration for the iterative refinement of the eigenvectors of real symmetric matrices.
\newblock In {\em ICASSP}, 2005.

\bibitem{ongie2020deep}
Gregory Ongie, Ajil Jalal, Christopher~A Metzler, et~al.
\newblock Deep learning techniques for inverse problems in imaging.
\newblock {\em IEEE J. Sel. Areas Inf. Theory}, 1(1):39--56, 2020.

\bibitem{papailiopoulos2013sparse}
Dimitris Papailiopoulos, Alexandros Dimakis, and Stavros Korokythakis.
\newblock Sparse {PCA} through low-rank approximations.
\newblock In {\em ICML}, 2013.

\bibitem{parkhomenko2009sparse}
Elena Parkhomenko, David Tritchler, and Joseph Beyene.
\newblock Sparse canonical correlation analysis with application to genomic data integration.
\newblock {\em Stat. Appl. Genet. Mol. Biol.}, 8(1), 2009.

\bibitem{parlett1998symmetric}
Beresford~N Parlett.
\newblock {\em The symmetric eigenvalue problem}.
\newblock SIAM, 1998.

\bibitem{peng2020solving}
Pei Peng, Shirin Jalali, and Xin Yuan.
\newblock Solving inverse problems via auto-encoders.
\newblock {\em IEEE J. Sel. Areas Inf. Theory}, 1(1):312--323, 2020.

\bibitem{peters1970ax}
Gwendoline Peters and James~H Wilkinson.
\newblock Ax=$\lambda${B}x and the generalized eigenproblem.
\newblock {\em SIAM J. Numer. Anal.}, 7(4):479--492, 1970.

\bibitem{qiu2020robust}
Shuang Qiu, Xiaohan Wei, and Zhuoran Yang.
\newblock Robust one-bit recovery via {R}e{LU} generative networks: {N}ear-optimal statistical rate and global landscape analysis.
\newblock In {\em ICML}, 2020.

\bibitem{raj2019gan}
Ankit Raj, Yuqi Li, and Yoram Bresler.
\newblock {GAN}-based projector for faster recovery with convergence guarantees in linear inverse problems.
\newblock In {\em ICCV}, 2019.

\bibitem{safo2018sparse}
Sandra~E Safo, Jeongyoun Ahn, Yongho Jeon, and Sungkyu Jung.
\newblock Sparse generalized eigenvalue problem with application to canonical correlation analysis for integrative analysis of methylation and gene expression data.
\newblock {\em Biometrics}, 74(4):1362--1371, 2018.

\bibitem{scott1981solving}
David~S Scott.
\newblock Solving sparse symmetric generalized eigenvalue problems without factorization.
\newblock {\em SIAM J. Numer. Anal.}, 18(1):102--110, 1981.

\bibitem{shah2018solving}
Viraj Shah and Chinmay Hegde.
\newblock Solving linear inverse problems using {GAN} priors: An algorithm with provable guarantees.
\newblock In {\em ICASSP}, 2018.

\bibitem{shamshad2020compressed}
Fahad Shamshad and Ali Ahmed.
\newblock Compressed sensing-based robust phase retrieval via deep generative priors.
\newblock {\em IEEE Sens. J.}, 21(2):2286--2298, 2020.

\bibitem{shao2011sparse}
Jun Shao, Yazhen Wang, Xinwei Deng, and Sijian Wang.
\newblock Sparse linear discriminant analysis by thresholding for high dimensional data.
\newblock {\em Ann. Stat.}, 39(2):1241--1265, 2011.

\bibitem{shen2008sparse}
Haipeng Shen and Jianhua~Z Huang.
\newblock Sparse principal component analysis via regularized low rank matrix approximation.
\newblock {\em J. Multivar. Anal.}, 99(6):1015--1034, 2008.

\bibitem{song2015sparse}
Junxiao Song, Prabhu Babu, and Daniel~P Palomar.
\newblock Sparse generalized eigenvalue problem via smooth optimization.
\newblock {\em IEEE Trans. Signal Process.}, 63(7):1627--1642, 2015.

\bibitem{sriperumbudur2011majorization}
Bharath~K Sriperumbudur, David~A Torres, and Gert~RG Lanckriet.
\newblock A majorization-minimization approach to the sparse generalized eigenvalue problem.
\newblock {\em Mach. Learn.}, 85:3--39, 2011.

\bibitem{suo2017sparse}
Xiaotong Suo, Victor Minden, Bradley Nelson, Robert Tibshirani, and Michael Saunders.
\newblock Sparse canonical correlation analysis.
\newblock {\em https://arxiv.org/1705.10865}, 2017.

\bibitem{tan2018sparse}
Kean~Ming Tan, Zhaoran Wang, Han Liu, and Tong Zhang.
\newblock Sparse generalized eigenvalue problem: {O}ptimal statistical rates via truncated {R}ayleigh flow.
\newblock {\em J. R. Stat. Soc. Series B Stat. Methodol.}, 80(5):1057--1086, 2018.

\bibitem{thompson1984canonical}
Bruce Thompson.
\newblock {\em Canonical correlation analysis: {U}ses and interpretation}.
\newblock Number~47. 1984.

\bibitem{van2018compressed}
Dave Van~Veen, Ajil Jalal, Mahdi Soltanolkotabi, et~al.
\newblock Compressed sensing with deep image prior and learned regularization.
\newblock {\em https://arxiv.org/1806.06438}, 2018.

\bibitem{vershynin2010introduction}
Roman Vershynin.
\newblock Introduction to the non-asymptotic analysis of random matrices.
\newblock {\em https://arxiv.org/1011.3027}, 2010.

\bibitem{vu2013fantope}
Vincent~Q Vu, Juhee Cho, Jing Lei, and Karl Rohe.
\newblock Fantope projection and selection: {A} near-optimal convex relaxation of sparse {PCA}.
\newblock In {\em NeurIPS}, 2013.

\bibitem{wang2022new}
Biyi Wang, Hengbin An, Hehu Xie, and Zeyao Mo.
\newblock A new subspace iteration algorithm for solving generalized eigenvalue problems.
\newblock {\em https://arxiv.org/2212.14520}, 2022.

\bibitem{wang2016statistical}
Tengyao Wang, Quentin Berthet, and Richard~J Samworth.
\newblock Statistical and computational trade-offs in estimation of sparse principal components.
\newblock {\em Ann. Stat.}, 44(5):1896--1930, 2016.

\bibitem{wang2014tighten}
Zhaoran Wang, Huanran Lu, and Han Liu.
\newblock Tighten after relax: {M}inimax-optimal sparse {PCA} in polynomial time.
\newblock In {\em NeurIPS}, 2014.

\bibitem{wei2019statistical}
Xiaohan Wei, Zhuoran Yang, and Zhaoran Wang.
\newblock On the statistical rate of nonlinear recovery in generative models with heavy-tailed data.
\newblock In {\em ICML}, 2019.

\bibitem{whang2020compressed}
Jay Whang, Qi~Lei, and Alexandros~G Dimakis.
\newblock Compressed sensing with invertible generative models and dependent noise.
\newblock {\em https://arxiv.org/2003.08089}, 2020.

\bibitem{witten2009penalized}
Daniela~M Witten, Robert Tibshirani, and Trevor Hastie.
\newblock A penalized matrix decomposition, with applications to sparse principal components and canonical correlation analysis.
\newblock {\em Biostatistics}, 10(3):515--534, 2009.

\bibitem{witten2009extensions}
Daniela~M Witten and Robert~J Tibshirani.
\newblock Extensions of sparse canonical correlation analysis with applications to genomic data.
\newblock {\em Stat. Appl. Genet. Mol. Biol.}, 8(1), 2009.

\bibitem{xu2019towards}
Zhiqiang Xu and Ping Li.
\newblock Towards practical alternating least-squares for {CCA}.
\newblock {\em NeurIPS}, 2019.

\bibitem{yuan2019decomposition}
Ganzhao Yuan, Li~Shen, and Wei-Shi Zheng.
\newblock A decomposition algorithm for the sparse generalized eigenvalue problem.
\newblock In {\em CVPR}, 2019.

\bibitem{yuan2013truncated}
Xiao-Tong Yuan and Tong Zhang.
\newblock Truncated power method for sparse eigenvalue problems.
\newblock {\em J. Mach. Learn. Res.}, 14(4), 2013.

\bibitem{zhang1999subspace}
Tong Zhang, Gene~H Golub, and KH~Law.
\newblock Subspace iterative methods for eigenvalue problems.
\newblock {\em Linear algebra and its applications}, 294(1-3):239--258, 1999.

\end{thebibliography}
\bibliographystyle{plain}


\newpage
\appendix
\onecolumn

\section{Instances of GGEPs}
\label{app:inst_GGEPs}

GEP encompasses a broad range of eigenvalue problems, and in the following, we specifically discuss three noteworthy examples under generative priors. 
\begin{itemize}
    \item \textbf{Generative Principle Component Analysis (GPCA)}. Given a data matrix $\hat{\bSigma}$ that typically represents the covariance matrix of $m$ observed data points with $n$ features, the goal of PCA is to find a projection that maximizes variance. Under the generative prior, the generative PCA (GPCA) problem can be formulated as follows (assuming that the range of the generative model $G$ is a subset of the unit sphere for simplicity)
    \begin{equation}
        \max_{\bu \in \mathbb{R}^n}~\bu^\top \hat{\bSigma}\bu \quad \text{s.t. } \quad \bu \in \calR(G).
    \end{equation}
    This problem can be derived from the optimization problem for GGEP in Eq.~\eqref{eq:opt_ggep} with $\hat{\bA}=\hat{\bSigma}$ and $\hat{\bB}=\bI_n$. 

    \item \textbf{Generative Fisher's Discriminant Analysis (GFDA)}. Given $m$ observations with $n$ features belonging to $K$ different classes, Fisher's discriminant analysis (FDA) aims to find a low-dimensional projection that maps the observations to a lower-dimensional space, where the between-class variance $\bSigma_b$ is large while the within-class variance $\bSigma_w$ is relatively small. Let $(\hat{\bSigma}_b,\hat{\bSigma}_w)$ be the empirical version of $(\bSigma_b,\bSigma_w)$ constructed from the observations. Under generative priors, the goal of generative FDA (GFDA) is to solve the following optimization problem:
    \begin{align}
        \max_{\bu \in \mathbb{R}^n}~\frac{\bu^\top \hat{\bSigma}_b \bu}{\bu^\top \hat{\bSigma}_w\bu}  \quad \text{s.t. } \quad  \bu^\top \hat{\bSigma}_w\bu \ne 0, \text{ } \bu \in \calR(G),
    \end{align}
    which can be derived from the optimization problem for GGEP in Eq.~\eqref{eq:opt_ggep} with $\hat{\bA}=\hat{\bSigma}_b$ and $\hat{\bB}=\hat{\bSigma}_w$.

    \item \textbf{Generative Canonical Correlation Analysis (GCCA).} Given random vectors $\bx, \by\in \mathbb{R}^p$ and their realizations, let $\bSigma_{xx}$ and $\bSigma_{yy}$ be the corresponding covariance matrices, $\bSigma_{xy}$ be the cross-covariance matrix between $\bx$ and $\by$, and $\tilde{\bSigma}_{xx},\tilde{\bSigma}_{yy},\tilde{\bSigma}_{xy}$ be the empirical version of $\bSigma_{xx},\bSigma_{yy},\bSigma_{xy}$, respectively. Then, canonical correlation analysis aims to solve the following optimization problem:
    \begin{align}
        \max_{\bu_x,\bu_y} & \quad \bu_x^\top \tilde{\bSigma}_{xy}\bu_y  \nn\\
        \text{s.t. }  & \quad\bu_x^\top \tilde{\bSigma}_{xx}\bu_x = 1, \quad \bu_y^\top \tilde{\bSigma}_{yy}\bu_y = 1.\label{eq:cca}
    \end{align}
    The optimization problem in Eq.~\eqref{eq:cca} can be equivalently formulated as
    \begin{align}
        \max_{\bu \in \bbR^n}~\frac{\bu^\top \hat{\bA}\bu}{\bu^\top \hat{\bB}\bu} \quad \text{s.t.} \quad \bu^\top \hat{\bB}\bu \ne 0,\label{eq:cca_equiv}
    \end{align}
    where 
    \begin{align}
        \hat{\bA}:=\begin{bmatrix}
            0 & \tilde{\bSigma}_{xy}\\
            \tilde{\bSigma}_{xy}^\top & 0
        \end{bmatrix},~\hat{\bB}:=\begin{bmatrix}
            \tilde{\bSigma}_{xx} & 0 \\
            0 & \tilde{\bSigma}_{yy}
        \end{bmatrix},
    \end{align}
      and $\bu = \begin{bmatrix}
            \bu_x  \\
            \bu_y
        \end{bmatrix}$. Imposing the generative prior on $\bu$ and adding the corresponding constraint into Eq.~\eqref{eq:cca_equiv} (leading to the optimization problem for GGEP in Eq.~\eqref{eq:opt_ggep}), we derive generative CCA (GCCA).
 \end{itemize}

\section{Proof of Theorem~\ref{thm:main_thm1} (Guarantees for Globally Optimal Solutions)}
\label{app:proof_main_thm1}

\subsection{Auxiliary Lemmas for the Proof of Theorem~\ref{thm:main_thm1}}
\label{app:thm1_lemmas}

We present the following useful lemma, which shows that for any globally optimal solution $\hat{\bu}$ to GGEP, we have $\hat{\bu}^\top \hat{\bB} \hat{\bu} > 0$ under appropriate conditions.

\begin{lemma}
\label{lem:imp_icml}
Suppose that Assumptions~\ref{assp:normG},~\ref{assp:generalized_eigens}, and~\ref{assp:pert} hold for the GEP and generative model $G$. Let $\hat{\bu}$ be a globally optimal solution to Eq.~\eqref{eq:opt_ggep} for GGEP. Then, for any $\delta \in (0,1)$ satisfying $\delta = O(m/n)$ and $m = \Omega\big(k\log\frac{4Lr}{\delta}\big)$, we have
\begin{equation}
    \hat{\bu}^\top \hat{\bB} \hat{\bu} > 0.
\end{equation}
\end{lemma}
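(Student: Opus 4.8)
The plan is to prove the (formally stronger, but equally easy) statement that $\bu^\top\hat{\bB}\bu>0$ for \emph{every} $\bu\in\calR(G)$; the lemma then follows at once, since any feasible point of Eq.~\eqref{eq:opt_ggep}, and in particular the global optimizer $\hat{\bu}$, belongs to $\calR(G)\subseteq\calS^{n-1}$ by Assumptions~\ref{assp:normG} and~\ref{assp:generalized_eigens}. Writing $\hat{\bB}=\bB+\bF$ and using $\|\bu\|_2=1$, we have $\bu^\top\bB\bu\ge\lambda_{\min}(\bB)>0$, so it will suffice to bound $|\bu^\top\bF\bu|$ \emph{uniformly} over $\calR(G)$ by something strictly below $\lambda_{\min}(\bB)$.

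The key step is discretization, carried out exactly as in Remark~\ref{remark:crawford_num}: by the $L$-Lipschitz continuity of $G$ and \cite[Lemma~5.2]{vershynin2010introduction} there is a $\delta$-net $M'\subseteq\calR(G)\subseteq\calS^{n-1}$ of $\calR(G)$ with $\log|M'|\le k\log\frac{4Lr}{\delta}$. I would fix $\bu\in\calR(G)$, choose $\tilde{\bu}\in M'$ with $\|\bu-\tilde{\bu}\|_2\le\delta$, and split
\begin{equation}
    \bu^\top\bF\bu=\tilde{\bu}^\top\bF\tilde{\bu}+\bu^\top\bF(\bu-\tilde{\bu})+(\bu-\tilde{\bu})^\top\bF\tilde{\bu}.
\end{equation}
For the first term, $|\tilde{\bu}^\top\bF\tilde{\bu}|\le\rho(\bF,M')$, and Assumption~\ref{assp:pert} applied with $S_1=S_2=M'$ (whose cardinality condition is implied by $m=\Omega\big(k\log\frac{4Lr}{\delta}\big)$, and whose vectors are unit vectors) yields $\rho(\bF,M')\le C\sqrt{2k\log\tfrac{4Lr}{\delta}\big/m}$. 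Each of the last two terms is at most $\|\bF\|_{2\to2}\,\|\bu-\tilde{\bu}\|_2\le\delta\,\|\bF\|_{2\to2}$, and since $\|\bF\|_{2\to2}=O(n/m)$ and $\delta=O(m/n)$, their sum is $O(1)$ with an arbitrarily small implied constant.

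Putting these together, choosing the implied constant in $m=\Omega\big(k\log\frac{4Lr}{\delta}\big)$ large enough and the one in $\delta=O(m/n)$ small enough, we get $|\bu^\top\bF\bu|\le\frac12\lambda_{\min}(\bB)$ for all $\bu\in\calR(G)$, hence $\bu^\top\hat{\bB}\bu=\bu^\top\bB\bu+\bu^\top\bF\bu\ge\frac12\lambda_{\min}(\bB)>0$; taking $\bu=\hat{\bu}$ then finishes the proof. I expect the only point requiring genuine care to be this last bit of constant bookkeeping — checking that the conditions $\delta=O(m/n)$ and $m=\Omega\big(k\log\frac{4Lr}{\delta}\big)$ are mutually consistent with the implied constants we need — but this is precisely the regime already used in Remark~\ref{remark:crawford_num} to obtain Eq.~\eqref{eq:crawford_assump}, so no new difficulty arises. (Note that global optimality of $\hat{\bu}$ plays no role here; only $\hat{\bu}\in\calR(G)$ is used, which is also why the same argument covers the iterates $\bu_t$ of Algorithm~\ref{algo:prfm} for $t\ge1$.)
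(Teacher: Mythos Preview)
Your proposal is correct and follows essentially the same approach as the paper: both arguments use the $\delta$-net $M'$ of $\calR(G)$, apply Assumption~\ref{assp:pert} with $S_1=S_2=M'$ to control the on-net quadratic form, and handle the off-net remainder via the spectral-norm bound $\|\bF\|_{2\to2}=O(n/m)$ combined with $\delta=O(m/n)$. The only cosmetic difference is that you split $\hat{\bB}=\bB+\bF$ first and bound $\bu^\top\bB\bu$ directly on the unit vector $\bu$, whereas the paper expands $\hat{\bu}^\top\hat{\bB}\hat{\bu}$ around the net point $\bw$ and bounds the cross terms using $\|\hat{\bB}\|_{2\to2}$; your observation that optimality of $\hat{\bu}$ is never used (only $\hat{\bu}\in\calR(G)$) is also implicit in the paper's proof.
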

\begin{proof}
    For any $\mathbf{u} \in \calS^{n-1}$, we have $\mathbf{u}^\top \mathbf{B}\mathbf{u} \ge \lambda_{\min}(\mathbf{B})$, where $\lambda_{\min}(\mathbf{B}) > 0$ denotes the smallest eigenvalue of the positive definite matrix $\mathbf{B}$. Additionally, let $M$ be a $(\delta/L)$-net of $B^k(r)$. From~\cite[Lemma~5.2]{vershynin2010introduction}, we know that there exists such a net with 
    \begin{equation}
        \log |M| \le k \log\frac{4 Lr}{\delta}. \label{eq:net_size}
    \end{equation}
    Since $G$ is $L$-Lipschitz continuous, we have that $M':=G(M)$ is a $\delta$-net of $\calR(G) = G(B_2^k(r))$.
    Note that $M' \subseteq \mathcal{R}(G) \subseteq \mathcal{S}^{n-1}$. Since $\hat{\mathbf{u}} \in \calR(G)$, there exists a $\mathbf{w} \in M'$ such that $\|\hat{\mathbf{u}} - \mathbf{w}\|_2 \le \delta$. Then, 
    \begin{align} 
    \hat{\mathbf{u}}^\top \hat{\mathbf{B}} \hat{\mathbf{u}} &= (\hat{\mathbf{u}}-\mathbf{w}+\mathbf{w})^\top \hat{\mathbf{B}} (\hat{\mathbf{u}}-\mathbf{w}+\mathbf{w}) \\ 
    & =\mathbf{w}^\top \hat{\mathbf{B}} \mathbf{w} + (\hat{\mathbf{u}}-\mathbf{w})^\top \hat{\mathbf{B}} \mathbf{w} + \mathbf{w}^\top \hat{\mathbf{B}} (\hat{\mathbf{u}}-\mathbf{w}) + (\hat{\mathbf{u}}-\mathbf{w})^\top \hat{\mathbf{B}} (\hat{\mathbf{u}}-\mathbf{w}) \\ 
    & = \mathbf{w}^\top \mathbf{B} \mathbf{w} + \mathbf{w}^\top \mathbf{F}\mathbf{w} + (\hat{\mathbf{u}}-\mathbf{w})^\top \hat{\mathbf{B}} \mathbf{w} + \mathbf{w}^\top \hat{\mathbf{B}} (\hat{\mathbf{u}}-\mathbf{w}) + (\hat{\mathbf{u}}-\mathbf{w})^\top \hat{\mathbf{B}} (\hat{\mathbf{u}}-\mathbf{w}). \label{eq:imp_icml_comb0}
    \end{align}
Since $\mathbf{w} \in M'$ is a unit vector, we have $\mathbf{w}^\top \mathbf{B} \mathbf{w} \ge \lambda_{\min}(\mathbf{B})$. Using Assumption~\ref{assp:pert} with $S_1$ and $S_2$ both set to $M'$, we obtain $|\mathbf{w}^\top \mathbf{F}\mathbf{w}| \le C\sqrt{\frac{k\log\frac{4Lr}{\delta}}{m}}$. Additionally, from Assumption~\ref{assp:pert}, we have the upper bound $\|\hat{\mathbf{B}}\|_{2 \to 2} = \|\bB + \bF\|_{2\to2} \le \lambda_{\max}(\mathbf{B}) + C' \frac{n}{m}$.

Therefore, we obtain that when $m = \Omega\big(k\log\frac{4Lr}{\delta}\big)$ (with a sufficiently large implied positive constant) and $\delta = O(\frac{m}{n})$ (with a sufficiently small implied positive constant), it follows from~\eqref{eq:imp_icml_comb0} that\footnote{Note that as mentioned in the footnote at the end of Remark~\ref{remark:crawford_num}, $\bA$ and $\bB$ are considered fixed matrices, and for brevity, we omit the dependence on them for the relevant positive constants.}
\begin{equation}
    \hat{\mathbf{u}}^\top \hat{\mathbf{B}} \hat{\mathbf{u}} \ge \lambda_{\min}(\mathbf{B}) -  C\sqrt{\frac{k\log\frac{4Lr}{\delta}}{m}} - \delta(2+\delta)\left(\lambda_{\max}(\bB) + C' \frac{n}{m}\right) > 0.
\end{equation}
\end{proof}

\subsection{Complete Proof of Theorem~\ref{thm:main_thm1}}

First, since $\mathrm{dim}(\mathrm{span}\{\bv_1,\bv_2,\ldots,\bv_n\})=n$, we know that there exist coefficients $g_i$ such that the unit vector $\hat{\bu}$ can be written as
\begin{equation}
    \hat{\bu} = \sum_{i=1}^n g_i \bv_i.
\end{equation}
Let $d = 1/\|\bv_1\|_2$. We have $\bv^* = d \bv_1 = \frac{\bv_1}{\|\bv_1\|_2} \in \calR(G)$. Additionally, we have the following:
\begin{align}
    \frac{(\bv^*)^\top \bA \bv^*}{(\bv^*)^\top \bB \bv^*} - \frac{\hat{\bu}^\top \bA\hat{\bu}}{\hat{\bu}^\top \bB\hat{\bu}} &= \lambda_1 - \frac{\sum_{i=1}^n \lambda_i g_i^2}{\sum_{i=1}^n g_i^2} \\
    & = \frac{\sum_{i=2}^n (\lambda_1-\lambda_i) g_i^2}{\sum_{i=1}^n g_i^2} \\
    & \ge \frac{(\lambda_1-\lambda_2) \sum_{i=2}^n  g_i^2}{\sum_{i=1}^n g_i^2}. \label{eq:lb_init}
\end{align}
Since $\bv_i^\top \bB \bv_i = 1$ and for $i\ne j$, $\bv_i^\top \bB \bv_j = 0$, if letting $\tilde{\bv}_i = \bB^{1/2}\bv_i$, we observe that $\tilde{\bv}_1,\ldots,\tilde{\bv}_n$ form an orthonormal basis of $\bbR^n$. Then, if setting $\tilde{\bu} = \bB^{1/2} \hat{\bu} = \sum_{i=1}^n g_i \tilde{\bv}_i$, we have
\begin{align}
    \sum_{i=1}^n g_i^2 = \|\tilde{\bu}\|_2^2  = \left\|\bB^{1/2}\hat{\bu}\right\|_2^2 \le \lambda_{\max}(\bB). \label{eq:lb_init2}
\end{align}
Combining Eqs.~\eqref{eq:lb_init} and~\eqref{eq:lb_init2}, we obtain
\begin{align}
    \frac{(\bv^*)^\top \bA \bv^*}{(\bv^*)^\top \bB \bv^*} - \frac{\hat{\bu}^\top \bA\hat{\bu}}{\hat{\bu}^\top \bB\hat{\bu}} \ge \frac{(\lambda_1-\lambda_2) \sum_{i=2}^n  g_i^2}{\lambda_{\max}(\bB)}. \label{eq:lb_final}
\end{align}
In addition, we have 
\begin{align}
    &\frac{(\bv^*)^\top \bA \bv^*}{(\bv^*)^\top \bB \bv^*} - \frac{\hat{\bu}^\top \bA\hat{\bu}}{\hat{\bu}^\top \bB\hat{\bu}} = \frac{((\bv^*)^\top \bA \bv^*)(\hat{\bu}^\top \bB\hat{\bu})-(\hat{\bu}^\top \bA\hat{\bu})((\bv^*)^\top \bB \bv^*)}{((\bv^*)^\top \bB \bv^*)(\hat{\bu}^\top \bB\hat{\bu})},\label{eq:ub_init}
\end{align}
with $(\bv^*)^\top \bB \bv^* = d^2$, $\hat{\bu}^\top \bB\hat{\bu} \ge \lambda_{\min}(\bB)$, and the numerator $((\bv^*)^\top \bA \bv^*)(\hat{\bu}^\top \bB\hat{\bu})-(\hat{\bu}^\top \bA\hat{\bu})((\bv^*)^\top \bB \bv^*)$ satisfies
\begin{align}
    &((\bv^*)^\top \bA \bv^*)(\hat{\bu}^\top \bB\hat{\bu})-(\hat{\bu}^\top \bA\hat{\bu})((\bv^*)^\top \bB \bv^*) \nn\\
    &= ((\bv^*)^\top (\hat{\bA}-\bE) \bv^*)(\hat{\bu}^\top (\hat{\bB}-\bF)\hat{\bu})-(\hat{\bu}^\top (\hat{\bA}-\bE)\hat{\bu})((\bv^*)^\top (\hat{\bB}-\bF) \bv^*) \\
    & \le   ((\bv^*)^\top \bE \bv^*)(\hat{\bu}^\top \bF\hat{\bu}) - (\hat{\bu}^\top \bE\hat{\bu})((\bv^*)^\top \bF \bv^*) \nn \\
    & \quad - ((\bv^*)^\top \hat{\bA} \bv^*)(\hat{\bu}^\top \bF\hat{\bu}) + (\hat{\bu}^\top \hat{\bA}\hat{\bu})((\bv^*)^\top \bF \bv^*) \nn \\
    & \quad - ((\bv^*)^\top \bE \bv^*)(\hat{\bu}^\top \hat{\bB}\hat{\bu}) + (\hat{\bu}^\top \bE\hat{\bu})((\bv^*)^\top \hat{\bB} \bv^*), \label{eq:opt_ineq}
\end{align}
where Eq.~\eqref{eq:opt_ineq} follows from the conditions that $\hat{\bu}$ is an optimal solution to Eq.~\eqref{eq:opt_ggep} and $\bv^* \in \calR(G)$ (note that it follows from Lemma~\ref{app:thm1_lemmas} that $\hat{\bu}^\top\hat{\bB}\hat{\bu} > 0$ and $(\bv^*)^\top\hat{\bB}\bv^* > 0$). Let $M$ be a $(\delta/L)$-net of $B^k(r)$. From~\cite[Lemma~5.2]{vershynin2010introduction}, we know that there exists such a net with 
    \begin{equation}
        \log |M| \le k \log\frac{4 Lr}{\delta}. \label{eq:net_size}
    \end{equation}Note that since $G$ is $L$-Lipschitz continuous, we have that $G(M)$ is a $\delta$-net of $\calR(G) = G(B_2^k(r))$. Then, we can write $\hat{\bu}$ as 
    \begin{equation}\label{eq:hatbs_decomp}
     \hat{\bu} = (\hat{\bu} -\bar{\bu}) + \bar{\bu}, 
    \end{equation}
where $\bar{\bu} \in G(M)$ satisfies $\|\hat{\bu} -\bar{\bu}\|_2 \le \delta$. For the first term in the right-hand side of Eq.~\eqref{eq:opt_ineq}, we have\footnote{For brevity, throughout the following, we assume that both $\bE$ and $\bF$ are symmetric, and we make use of the equality that for any $\bs_1,\bs_2 \in \bbR^n$ and any symmetric $\bG \in \bbR^{n \times n}$, $\bs_1^T \bG \bs_1 - \bs_2^T \bG \bs_2 = (\bs_1+\bs_2)^T \bG(\bs_1 -\bs_2)$. For the case that $\bE$ and $\bF$ are asymmetric, we can easily obtain similar results using the equality that for any $\bs_1,\bs_2 \in \bbR^n$ and any $\bG \in \bbR^{n \times n}$, $\bs_1^T \bG \bs_1 - \bs_2^T \bG \bs_2 = 2 \left(\frac{\bs_1 + \bs_2}{2}\right)^T \bG \left(\frac{\bs_1 - \bs_2}{2}\right) + 2 \left(\frac{\bs_1 - \bs_2}{2}\right)^T \bG \left(\frac{\bs_1 + \bs_2}{2}\right)$.}
\begin{align}
    &((\bv^*)^\top \bE \bv^*)(\hat{\bu}^\top \bF\hat{\bu}) - (\hat{\bu}^\top \bE\hat{\bu})((\bv^*)^\top \bF \bv^*) \nn \\
    & = ((\bv^*)^\top \bE \bv^*)(\hat{\bu}^\top \bF\hat{\bu}) - ((\bv^*)^\top \bE\bv^*)((\bv^*)^\top \bF \bv^*) \nn \\
    & \quad + ((\bv^*)^\top \bE\bv^*)((\bv^*)^\top \bF \bv^*) - (\hat{\bu}^\top \bE\hat{\bu})((\bv^*)^\top \bF \bv^*) \\
    & = ((\bv^*)^\top \bE \bv^*) \cdot (\hat{\bu}-\bv^*)^\top \bF (\hat{\bu}+\bv^*) + ((\bv^*)^\top \bF \bv^*) \cdot (\hat{\bu}-\bv^*)^\top \bE (\hat{\bu}+\bv^*).\label{eq:opt_ineq_1_decomp}
\end{align}
From Assumption~\ref{assp:pert} and Eq.~\eqref{eq:net_size}, we obtain 
\begin{equation}
    \left|(\bv^*)^\top \bE \bv^*\right| \le C\sqrt{\frac{k\log\frac{4Lr}{\delta}}{m}}. \label{eq:opt_ineq_1_decomp0}
\end{equation}
Additionally, we have 
\begin{align}
    &\left|(\hat{\bu}-\bv^*)^\top \bF (\hat{\bu}+\bv^*)\right|  =  \left|(\hat{\bu} - \bar{\bu} + \bar{\bu}-\bv^*)^\top \bF (\hat{\bu}- \bar{\bu} + \bar{\bu}+\bv^*)\right| \\
    & \le \left|(\hat{\bu}-\bar{\bu})^\top \bF (\hat{\bu}-\bar{\bu})\right| + \left|(\hat{\bu}-\bar{\bu})^\top \bF (\bar{\bu}+\bv^*)\right| + \left|(\bar{\bu}-\bv^*)^\top \bF (\hat{\bu}-\bar{\bu})\right| \nn \\
    & \quad + \left|(\bar{\bu}-\bv^*)^\top \bF (\bar{\bu}+\bv^*)\right| \\
    & \le \frac{C' n \delta}{m} + \left|(\bar{\bu}-\bv^*)^\top \bF (\bar{\bu}+\bv^*)\right| \label{eq:opt_ineq_1_decomp1}\\
    & \le \frac{C' n \delta}{m} + C\sqrt{\frac{k\log\frac{4Lr}{\delta}}{m}} \cdot \|\bar{\bu}-\bv^*\|_2 \cdot \|\bar{\bu}+\bv^*\|_2, \label{eq:opt_ineq_1_decomp2}
\end{align}
where we use $\|\bF\|_{2\to 2} = O(n/m)$ and $\delta \in (0,1)$ in Eq.~\eqref{eq:opt_ineq_1_decomp1}, and Eq.~\eqref{eq:opt_ineq_1_decomp2} follows from Assumption~\ref{assp:pert} and Eq.~\eqref{eq:net_size}. Therefore, combining Eqs.~\eqref{eq:opt_ineq_1_decomp0} and~\eqref{eq:opt_ineq_1_decomp2}, we obtain that the first term in the right-hand side of Eq.~\eqref{eq:opt_ineq_1_decomp} can be upper bounded as follows:
\begin{align}
   & \left|((\bv^*)^\top \bE \bv^*) \cdot (\hat{\bu}-\bv^*)^\top \bF (\hat{\bu}+\bv^*)\right| \nn \\
    & \le C\sqrt{\frac{k\log\frac{4Lr}{\delta}}{m}} \left(\frac{C' n \delta}{m} + C\sqrt{\frac{k\log\frac{4Lr}{\delta}}{m}} \cdot \|\bar{\bu}-\bv^*\|_2 \cdot \|\bar{\bu}+\bv^*\|_2\right).
\end{align}
We have a similar bound for the second term in the right-hand side of Eq.~\eqref{eq:opt_ineq_1_decomp}, which gives
\begin{align}
    &\left|((\bv^*)^\top \bE \bv^*)(\hat{\bu}^\top \bF\hat{\bu}) - (\hat{\bu}^\top \bE\hat{\bu})((\bv^*)^\top \bF \bv^*)\right| \nn \\
    & \le 2C\sqrt{\frac{k\log\frac{4Lr}{\delta}}{m}} \left(\frac{C' n \delta}{m} + C\sqrt{\frac{k\log\frac{4Lr}{\delta}}{m}} \cdot \|\bar{\bu}-\bv^*\|_2 \cdot \|\bar{\bu}+\bv^*\|_2\right).\label{eq:opt_ineq_1_decomp_final}
\end{align}
Moreover, for the second term in the right-hand side of Eq.~\eqref{eq:opt_ineq}, we have
\begin{align}
    & (\hat{\bu}^\top \hat{\bA}\hat{\bu})((\bv^*)^\top \bF \bv^*) - ((\bv^*)^\top \hat{\bA} \bv^*)(\hat{\bu}^\top \bF\hat{\bu}) \nn \\
    &= (\hat{\bu}^\top \hat{\bA}\hat{\bu})((\bv^*)^\top \bF \bv^*) - ((\bv^*)^\top \hat{\bA}\bv^*)((\bv^*)^\top \bF \bv^*) + ((\bv^*)^\top \hat{\bA}\bv^*)((\bv^*)^\top \bF \bv^* - \hat{\bu}^\top \bF\hat{\bu}) \\
    & = (\hat{\bu}-\bv^*)^\top \hat{\bA}(\hat{\bu}+\bv^*)\cdot((\bv^*)^\top \bF \bv^*) + ((\bv^*)^\top \hat{\bA}\bv^*) \cdot (\bv^*-\hat{\bu})^\top \bF(\bv^*+\hat{\bu}),\label{eq:opt_ineq_2_decomp}
\end{align}
where 
\begin{align}
    & \left|(\hat{\bu}-\bv^*)^\top \hat{\bA}(\hat{\bu}+\bv^*)\right| = \left|(\hat{\bu}-\bar{\bu} + \bar{\bu}-\bv^*)^\top \hat{\bA}(\hat{\bu}-\bar{\bu} + \bar{\bu}+\bv^*)\right| \\
    & \le \frac{C' n\delta}{m} + \left|(\bar{\bu}-\bv^*)^\top \hat{\bA}(\bar{\bu}+\bv^*)\right| \\
    & \le \frac{C' n\delta}{m} + \|\bA\|_{2\to 2} \cdot \|\bar{\bu}-\bv^*\|_2 \cdot \|\bar{\bu}+\bv^*\|_2 + C\sqrt{\frac{k\log\frac{4Lr}{\delta}}{m}} \cdot \|\bar{\bu}-\bv^*\|_2 \cdot \|\bar{\bu}+\bv^*\|_2 \\
    & = \frac{C' n\delta}{m} + C_1' \cdot \|\bar{\bu}-\bv^*\|_2 \cdot \|\bar{\bu}+\bv^*\|_2,\label{eq:opt_ineq_2_decomp0}
\end{align}
with $C_1'$ denoting $\|\bA\|_{2\to 2}+C\sqrt{\frac{k\log\frac{4Lr}{\delta}}{m}}$. Similarly to Eq.~\eqref{eq:opt_ineq_1_decomp0}, we have 
\begin{equation}
    \left|(\bv^*)^\top \bF \bv^*\right| \le C\sqrt{\frac{k\log\frac{4Lr}{\delta}}{m}}.\label{eq:opt_ineq_2_decomp1}
\end{equation}
In addition, we have 
\begin{align}
    \left|((\bv^*)^\top \hat{\bA}\bv^*)\right|  \le \|\bA\|_{2\to 2}+C\sqrt{\frac{k\log\frac{4Lr}{\delta}}{m}} = C_1',\label{eq:opt_ineq_2_decomp2}
\end{align}
and 
\begin{align}
    &\left|(\bv^*-\hat{\bu})^\top \bF(\bv^*+\hat{\bu}) \right| =  \left|(\bv^*-\bar{\bu} +\bar{\bu}-\hat{\bu})^\top \bF(\bv^*+\bar{\bu} -\bar{\bu}+\hat{\bu}) \right| \\
    &  \le \left|(\bv^*-\bar{\bu})^\top \bF(\bv^*+\bar{\bu}) \right| + \frac{C' n\delta}{m} \\
    & \le C\sqrt{\frac{k\log\frac{4Lr}{\delta}}{m}} \cdot \|\bar{\bu}-\bv^*\|_2 \cdot \|\bar{\bu}+\bv^*\|_2 + \frac{C' n\delta}{m}. \label{eq:opt_ineq_2_decomp3}
\end{align}
Combining Eqs.~\eqref{eq:opt_ineq_2_decomp},~\eqref{eq:opt_ineq_2_decomp0},~\eqref{eq:opt_ineq_2_decomp1},~\eqref{eq:opt_ineq_2_decomp2},~\eqref{eq:opt_ineq_2_decomp3}, we obtain
\begin{align}
    &\left|(\hat{\bu}^\top \hat{\bA}\hat{\bu})((\bv^*)^\top \bF \bv^*) - ((\bv^*)^\top \hat{\bA} \bv^*)(\hat{\bu}^\top \bF\hat{\bu})\right| \nn \\
    & \le 2 C_1' \left(\frac{C' n \delta}{m} + C\sqrt{\frac{k\log\frac{4Lr}{\delta}}{m}} \cdot \|\bar{\bu}-\bv^*\|_2 \cdot \|\bar{\bu}+\bv^*\|_2\right).\label{eq:opt_ineq_2_decomp_final}
\end{align}
Similarly, we have that the third term in the right-hand side of Eq.~\eqref{eq:opt_ineq} can be bounded as 
\begin{align}
    &\left| (\hat{\bu}^\top \bE\hat{\bu})((\bv^*)^\top \hat{\bB} \bv^*)- ((\bv^*)^\top \bE \bv^*)(\hat{\bu}^\top \hat{\bB}\hat{\bu})\right| \nn \\
    & \le 2 C_1' \left(\frac{C' n \delta}{m} + C\sqrt{\frac{k\log\frac{4Lr}{\delta}}{m}} \cdot \|\bar{\bu}-\bv^*\|_2 \cdot \|\bar{\bu}+\bv^*\|_2\right).\label{eq:opt_ineq_3_decomp_final}
\end{align}
Then, combining Eqs.~\eqref{eq:opt_ineq},~\eqref{eq:opt_ineq_1_decomp_final},~\eqref{eq:opt_ineq_2_decomp_final},~\eqref{eq:opt_ineq_3_decomp_final}, and setting $\delta = O(m/n)$ with a sufficiently small implied constant, we obtain
\begin{align}
    &\left|((\bv^*)^\top \bA \bv^*)(\hat{\bu}^\top \bB\hat{\bu})-(\hat{\bu}^\top \bA\hat{\bu})((\bv^*)^\top \bB \bv^*)\right| \nn \\
    & \le 5C_1' \left(\frac{C' n \delta}{m} + C\sqrt{\frac{k\log\frac{4Lr}{\delta}}{m}} \cdot \|\bar{\bu}-\bv^*\|_2 \cdot \|\bar{\bu}+\bv^*\|_2\right).\label{eq:opt_ineq_final} 
\end{align}
Therefore, combining Eqs.~\eqref{eq:ub_init} and~\eqref{eq:opt_ineq_final}, we obtain
\begin{align}
    &\frac{(\bv^*)^\top \bA \bv^*}{(\bv^*)^\top \bB \bv^*} - \frac{\hat{\bu}^\top \bA\hat{\bu}}{\hat{\bu}^\top \bB\hat{\bu}} \le \frac{5C_1' \left(\frac{C' n \delta}{m} + C\sqrt{\frac{k\log\frac{4Lr}{\delta}}{m}} \cdot \|\bar{\bu}-\bv^*\|_2 \cdot \|\bar{\bu}+\bv^*\|_2\right)}{d^2 \cdot \lambda_{\min}(\bB)}. \label{eq:ub_final}
\end{align}
Combining Eqs.~\eqref{eq:lb_final} and~\eqref{eq:ub_final}, we obtain
\begin{align}
    \sum_{i=2}^n g_i^2 \le \frac{5C_1' \cdot \lambda_{\max}(\bB) \left(\frac{C' n \delta}{m} + C\sqrt{\frac{k\log\frac{4Lr}{\delta}}{m}} \cdot \|\bar{\bu}-\bv^*\|_2 \cdot \|\bar{\bu}+\bv^*\|_2\right)}{d^2 (\lambda_1-\lambda_2) \cdot \lambda_{\min}(\bB)}. \label{eq:ineq_final}
\end{align}
Furthermore, recall that we set $\tilde{\bu} = \bB^{1/2}\hat{\bu}$ and $\tilde{\bv}_1 = \bB^{1/2}\bv_1$, which leads to 
\begin{align}
    \left\|\tilde{\bu} -\sqrt{\sum_{i=1}^n g_i^2}\tilde{\bv}_1\right\|_2^2 & = \|\tilde{\bu}\|_2^2 + \sum_{i=1}^n g_i^2 - 2 \sqrt{\sum_{i=1}^n g_i^2} \tilde{\bu}^\top \tilde{\bv}_1 \\
    & = 2 \sum_{i=1}^n g_i^2 - 2 g_1 \sqrt{\sum_{i=1}^n g_i^2} \\
    & \le 2\sum_{i=2}^n g_i^2. \label{eq:ineq_cont1}
\end{align}
On the other hand, we have 
\begin{equation}
    \left\|\tilde{\bu} -\sqrt{\sum_{i=1}^n g_i^2}\tilde{\bv}_1\right\|_2^2 = \left\|\bB^{1/2}\left(\hat{\bu} - \sqrt{\sum_{i=1}^n g_i^2}\bv_1\right)\right\|_2^2 \ge \lambda_{\min}(\bB) \left\|\hat{\bu} - \sqrt{\sum_{i=1}^n g_i^2}\bv_1\right\|_2^2.\label{eq:ineq_cont2}
\end{equation}
Therefore, combining Eqs.~\eqref{eq:ineq_cont1} and~\eqref{eq:ineq_cont2}, we obtain
\begin{equation}
    \left\|\hat{\bu} - \sqrt{\sum_{i=1}^n g_i^2}\bv_1\right\|_2^2 \le  \frac{2\sum_{i=2}^n g_i^2}{\lambda_{\min}(\bB)}. \label{eq:ineq_cont3}
\end{equation}
Note that by the property of projection, we have
\begin{equation}
    \left\|\hat{\bu} - (\hat{\bu}^\top \bv^*)\bv^*\right\|_2^2 \le  \left\|\hat{\bu} - \sqrt{\sum_{i=1}^n g_i^2}\bv_1\right\|_2^2.\label{eq:proj_imp_eq}
\end{equation}
In addition, note that for any unit vectors $\bx_1, \bx_2 \in \bbR^n$, if letting $\bx_1^T\bx_2 = \cos \alpha$, we have
\begin{align}
    &\|\bx_1 - (\bx_1^\top\bx_2)\bx_2\|_2^2 = 1-\cos^2\alpha = (1-\cos \alpha)(1+\cos \alpha) \\
    & \ge \frac{1}{2} \cdot \min \{2(1-\cos \alpha),2(1+\cos \alpha)\} = \frac{1}{2}\cdot \min\{\|\bx_1-\bx_2\|_2^2, \|\bx_1+\bx_2\|_2^2\}.
\end{align}
Therefore, we obtain
\begin{align}
 \min \left\{\left\|\hat{\bu} - \bv^*\right\|_2^2,\left\|\hat{\bu} + \bv^*\right\|_2^2 \right\} & \le 2 \left\|\hat{\bu} - (\hat{\bu}^\top \bv^*)\bv^*\right\|_2^2 \\
    & \le  \frac{4\sum_{i=2}^n g_i^2}{\lambda_{\min}(\bB)}. \label{eq:ineq_cont_final}
\end{align}
Therefore, combining Eqs.~\eqref{eq:ineq_final} and~\eqref{eq:ineq_cont_final}, we obtain
\begin{align}
    & \min \left\{\left\|\hat{\bu} - \bv^*\right\|_2^2,\left\|\hat{\bu} + \bv^*\right\|_2^2 \right\} \nn\\
    & \le \frac{20C_1' \cdot \lambda_{\max}(\bB) \left(\frac{C' n \delta}{m} + C\sqrt{\frac{k\log\frac{4Lr}{\delta}}{m}} \cdot \|\bar{\bu}-\bv^*\|_2 \cdot \|\bar{\bu}+\bv^*\|_2\right)}{d^2 (\lambda_1-\lambda_2) \cdot \lambda_{\min}^2(\bB)} \\
    & = C_1 \cdot \left(\frac{C' n \delta}{m} + C\sqrt{\frac{k\log\frac{4Lr}{\delta}}{m}} \cdot \|\bar{\bu}-\bv^*\|_2 \cdot \|\bar{\bu}+\bv^*\|_2\right),\label{eq:ineq_cont_final_1}
\end{align}
where $C_1'' := \frac{20C_1' \cdot \lambda_{\max}(\bB)}{d^2 (\lambda_1-\lambda_2) \cdot \lambda_{\min}^2(\bB)}$. Without loss of generality, we assume that $\hat{\bu}^\top \bar{\bv} \ge 0$. For this case, it follows from Eq.~\eqref{eq:ineq_cont_final_1} that
\begin{align}
    \left\|\hat{\bu} - \bv^*\right\|_2^2 &\le C_1'' \cdot \left(\frac{C' n \delta}{m} + 2C\sqrt{\frac{k\log\frac{4Lr}{\delta}}{m}} \cdot \|\bar{\bu}-\bv^*\|_2\right) \\
    & \le C_1'' \cdot \left(\frac{C' n \delta}{m} + 2C\sqrt{\frac{k\log\frac{4Lr}{\delta}}{m}} \cdot (\|\hat{\bu}-\bv^*\|_2 + \delta)\right). \label{eq:ineq_cont_final_2}
\end{align}
Then, if setting $\delta = O\big((k \log \frac{4Lr}{\delta})/n\big)$ (such that $n \delta / m = O\big((k\log\frac{4Lr}{\delta})/m\big)$), we obtain
\begin{equation}
\|\hat{\bu} - \bv^*\|_2 \le  C_1 \cdot \sqrt{\frac{k\log\frac{4Lr}{\delta}}{m}}. 
\end{equation}
Similarly, if assuming $\hat{\bu}^\top \bar{\bv} < 0$, we obtain
\begin{equation}
    \|\hat{\bu} + \bv^*\|_2 \le  C_1 \cdot \sqrt{\frac{k\log\frac{4Lr}{\delta}}{m}},
\end{equation}
which gives the desired result.

\section{Proof of Theorem~\ref{thm:main_thm2} (Guarantees for Algorithm~\ref{algo:prfm})}
\label{app:proof_main_thm2}

\subsection{Auxiliary Lemmas for Theorem~\ref{thm:main_thm2}}
\label{app:thm2_lemmas}

Before presenting the proof for Theorem~\ref{thm:main_thm2}, we first provide some useful lemmas.

\begin{lemma}\label{lem:basic_bds0}
     For any $\rho \in (\lambda_2,\lambda_1]$ and any $\bx = \sum_{i=1}^n f_i \bv_i \in \bbR^n$, we have 
     \begin{equation}
         (\rho -\lambda_2) \lambda_{\min}(\bB) \|\bx\|_2^2 - (\lambda_1 -\lambda_2) f_1^2 \le \bx^\top(\rho \bB -\bA) \bx \le (\rho -\lambda_n) \lambda_{\max}(\bB) \|\bx\|_2^2 - (\lambda_1 -\lambda_n) f_1^2.
     \end{equation}
\end{lemma}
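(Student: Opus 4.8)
The plan is to diagonalize the quadratic form $\bx^\top(\rho\bB-\bA)\bx$ in the $\bB$-orthonormal basis $\{\bv_i\}$, bound the ``off-leading'' terms using only the ordering of the generalized eigenvalues, and finally convert $\sum_i f_i^2$ into $\|\bx\|_2^2$ via the isometry $\bB^{1/2}$.

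First I would use the defining relations $\bA\bv_i=\lambda_i\bB\bv_i$ together with the normalization $\bv_i^\top\bB\bv_j=\delta_{ij}$ to compute, for $\bx=\sum_i f_i\bv_i$, that $\bx^\top\bA\bx=\sum_i\lambda_i f_i^2$ and $\bx^\top\bB\bx=\sum_i f_i^2$, hence
\begin{equation}
\bx^\top(\rho\bB-\bA)\bx=\sum_{i=1}^n(\rho-\lambda_i)f_i^2=(\rho-\lambda_1)f_1^2+\sum_{i=2}^n(\rho-\lambda_i)f_i^2.
\end{equation}
Next I would observe that for every $i\ge 2$ one has $\lambda_n\le\lambda_i\le\lambda_2<\rho$, so $0<\rho-\lambda_2\le\rho-\lambda_i\le\rho-\lambda_n$; applying this to the sum over $i\ge2$ and then substituting $\sum_{i=2}^n f_i^2=\sum_{i=1}^n f_i^2-f_1^2$ gives
\begin{equation}
(\rho-\lambda_2)\sum_{i=1}^n f_i^2-(\lambda_1-\lambda_2)f_1^2\ \le\ \bx^\top(\rho\bB-\bA)\bx\ \le\ (\rho-\lambda_n)\sum_{i=1}^n f_i^2-(\lambda_1-\lambda_n)f_1^2.
\end{equation}

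Finally I would pass from $\sum_i f_i^2$ to $\|\bx\|_2^2$: since $\tilde\bv_i:=\bB^{1/2}\bv_i$ form a Euclidean orthonormal basis of $\bbR^n$, we have $\sum_i f_i^2=\|\bB^{1/2}\bx\|_2^2=\bx^\top\bB\bx$, and therefore $\lambda_{\min}(\bB)\|\bx\|_2^2\le\sum_i f_i^2\le\lambda_{\max}(\bB)\|\bx\|_2^2$. Because the multipliers $\rho-\lambda_2$ and $\rho-\lambda_n$ are strictly positive, I may replace $\sum_i f_i^2$ by its lower bound $\lambda_{\min}(\bB)\|\bx\|_2^2$ in the left-hand inequality and by its upper bound $\lambda_{\max}(\bB)\|\bx\|_2^2$ in the right-hand inequality, which yields exactly the stated bounds.

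There is no real obstacle here; the only points requiring care are that the $\bv_i$ are $\bB$-orthonormal rather than Euclidean-orthonormal (so the change of basis by $\bB^{1/2}$ is needed to introduce $\|\bx\|_2^2$), and that one must check $\rho-\lambda_i>0$ for $i\ge2$ so that none of the inequalities are reversed when the eigenvalue-ordering bounds are inserted.
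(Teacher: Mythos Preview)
Your proposal is correct and follows essentially the same approach as the paper's proof: diagonalize $\bx^\top(\rho\bB-\bA)\bx$ in the $\bB$-orthonormal basis to get $\sum_i(\rho-\lambda_i)f_i^2$, bound the $i\ge2$ terms via the eigenvalue ordering, and then use $\sum_i f_i^2=\|\bB^{1/2}\bx\|_2^2$ together with the positivity of $\rho-\lambda_2$ and $\rho-\lambda_n$ to pass to $\|\bx\|_2^2$. The only cosmetic difference is that the paper establishes the bound $\lambda_{\min}(\bB)\|\bx\|_2^2\le\sum_i f_i^2\le\lambda_{\max}(\bB)\|\bx\|_2^2$ first and applies it last, whereas you derive it at the end.
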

\begin{proof}
    Let $\tilde{\bx} = \bB^{1/2} \bx = \sum_{i=1}^n f_i \tilde{\bv}_i$. We have $\|\tilde{\bx}\|_2^2 = \sum_{i=1}^n f_i^2$ and 
    \begin{equation}\label{eq:bx_tildebx_bds}
      \lambda_{\min}(\bB) \|\bx\|_2^2   \le  \sum_{i=1}^n f_i^2 = \|\tilde{\bx}\|_2^2 = \|\bB^{1/2}\bx\|_2^2 \le \lambda_{\max}(\bB) \|\bx\|_2^2.
    \end{equation}
    In addition, we have 
    \begin{align}
        \bx^\top(\rho \bB -\bA) \bx &= \sum_{i=1}^n f_i^2 (\rho - \lambda_i) \\
        & = \sum_{i=2}^n (\rho - \lambda_i) f_i^2 - (\lambda_1 -\rho) f_1^2. \label{eq:basic_xx}
    \end{align}
    From Eq.~\eqref{eq:basic_xx}, we obtain
    \begin{align}
        \bx^\top(\rho \bB -\bA) \bx &\ge (\rho-\lambda_2) \sum_{i=2}^n f_i^2 - (\lambda_1 -\rho) f_1^2\\
        & = (\rho-\lambda_2) \sum_{i=1}^n f_i^2 - (\lambda_1 -\lambda_2)  f_1^2 \\
        & \ge (\rho-\lambda_2) \lambda_{\min}(\bB) \|\bx\|_2^2 - (\lambda_1 -\lambda_2)  f_1^2. \label{eq:basic_lb}
    \end{align}
    Similarly, we have
    \begin{align}
        \bx^\top(\rho \bB -\bA) \bx &\le (\rho-\lambda_n) \sum_{i=2}^n f_i^2 - (\lambda_1 -\rho) f_1^2\\
        & = (\rho-\lambda_n) \sum_{i=1}^n f_i^2 - (\lambda_1 -\lambda_n)  f_1^2\\
        & \le (\rho-\lambda_n) \lambda_{\max}(\bB) \|\bx\|_2^2 - (\lambda_1 -\lambda_n)  f_1^2. \label{eq:basic_ub}
    \end{align}
\end{proof}

\begin{lemma}\label{lem:basic_bds1}
    For any $\rho \in (\lambda_2,\lambda_1]$ and any $\bx = \sum_{i=1}^n f_i \bv_i \in \bbR^n$, $\by = \sum_{i=1}^n g_i \bv_i \in \bbR^n$, if letting $\tau_1 = \eta (\rho-\lambda_2) \lambda_{\min}(\bB)$ and $\tau_2 = \eta  (\rho-\lambda_n) \lambda_{\max}(\bB) $, we have 
    \begin{align}
        & \eta \langle (\rho \bB -\bA) \bx, \by \rangle \ge \frac{\tau_1 + \tau_2}{2} \bx^\top \by - \frac{(\tau_2 -\tau_1)(\|\bx\|_2^2+\|\by\|_2^2)}{4} - \eta(\lambda_1 -\lambda_2)  f_1 g_1. 
    \end{align}
\end{lemma}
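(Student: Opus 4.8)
The plan is to obtain this bilinear lower bound by polarizing and invoking the quadratic estimates of Lemma~\ref{lem:basic_bds0}. Since $\rho\bB-\bA$ is symmetric, I would start from the identity
\[
\langle(\rho\bB-\bA)\bx,\by\rangle=\frac14\Big[(\bx+\by)^\top(\rho\bB-\bA)(\bx+\by)-(\bx-\by)^\top(\rho\bB-\bA)(\bx-\by)\Big].
\]
Writing $\bx+\by=\sum_i(f_i+g_i)\bv_i$ and $\bx-\by=\sum_i(f_i-g_i)\bv_i$, and noting that $\rho\in(\lambda_2,\lambda_1]$ so Lemma~\ref{lem:basic_bds0} is applicable to both, I would lower-bound the first bracketed quadratic form and upper-bound the second (the latter enters with a minus sign). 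This yields
\[
4\langle(\rho\bB-\bA)\bx,\by\rangle\ge(\rho-\lambda_2)\lambda_{\min}(\bB)\|\bx+\by\|_2^2-(\rho-\lambda_n)\lambda_{\max}(\bB)\|\bx-\by\|_2^2-(\lambda_1-\lambda_2)(f_1+g_1)^2+(\lambda_1-\lambda_n)(f_1-g_1)^2.
\]

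Next I would multiply by $\eta/4$, substitute $\tau_1,\tau_2$, and expand $\|\bx\pm\by\|_2^2=\|\bx\|_2^2+\|\by\|_2^2\pm2\bx^\top\by$. The quadratic-form part then reorganizes as $\frac{\tau_1-\tau_2}{4}\big(\|\bx\|_2^2+\|\by\|_2^2\big)+\frac{\tau_1+\tau_2}{2}\bx^\top\by$, which is exactly the first two terms on the right-hand side of the claim. It remains to control the residual first-coordinate contribution
\[
\frac{\eta(\lambda_1-\lambda_n)}{4}(f_1-g_1)^2-\frac{\eta(\lambda_1-\lambda_2)}{4}(f_1+g_1)^2,
\]
and show it is at least $-\eta(\lambda_1-\lambda_2)f_1g_1$. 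Using the eigenvalue ordering $\lambda_n\le\lambda_2$, hence $\lambda_1-\lambda_n\ge\lambda_1-\lambda_2\ge0$, I would replace $\lambda_1-\lambda_n$ by $\lambda_1-\lambda_2$ in the first term; the residual is then at least $\frac{\eta(\lambda_1-\lambda_2)}{4}\big[(f_1-g_1)^2-(f_1+g_1)^2\big]=-\eta(\lambda_1-\lambda_2)f_1g_1$, as required.

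I do not expect a genuine obstacle: the statement is a routine consequence of polarization together with Lemma~\ref{lem:basic_bds0}. The only delicate points are bookkeeping --- orienting the two one-sided bounds (lower for $\bx+\by$, upper for $\bx-\by$) so the signs combine correctly --- and recognizing that the two ``leftover'' first-coordinate terms collapse to precisely $-\eta(\lambda_1-\lambda_2)f_1g_1$ via the identity $(f_1-g_1)^2-(f_1+g_1)^2=-4f_1g_1$ and the spectral-gap inequality $\lambda_1-\lambda_n\ge\lambda_1-\lambda_2$, rather than to some weaker error term.
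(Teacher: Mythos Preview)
Your proposal is correct and follows essentially the same approach as the paper: polarization followed by the lower/upper bounds from Lemma~\ref{lem:basic_bds0}, then algebraic simplification. The only cosmetic difference is in the handling of the first-coordinate residual, where the paper splits $\lambda_1-\lambda_n=(\lambda_1-\lambda_2)+(\lambda_2-\lambda_n)$ and discards the nonnegative term $\frac{\eta(\lambda_2-\lambda_n)}{4}(f_1-g_1)^2$, which is equivalent to your step of replacing $\lambda_1-\lambda_n$ by $\lambda_1-\lambda_2$.
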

\begin{proof}
    From Lemma~\ref{lem:basic_bds0}, we have 
    \begin{align}
        &\eta \langle (\rho \bB -\bA) \bx, \by \rangle = \frac{\eta(\bx+\by)^\top (\rho \bB -\bA) (\bx+\by)}{4} -  \frac{\eta(\bx-\by)^\top (\rho \bB -\bA) (\bx-\by)}{4} \\
        & \ge \frac{\tau_1 \|\bx+\by\|_2^2}{4} - \frac{\eta(\lambda_1 -\lambda_2)  (f_1 +g_1)^2}{4} - \frac{\tau_2 \|\bx-\by\|_2^2}{4} + \frac{\eta(\lambda_1 -\lambda_n)  (f_1 -g_1)^2}{4} \\
        & = -\frac{(\tau_2 -\tau_1)(\|\bx\|_2^2+\|\by\|_2^2)}{4} + \frac{(\tau_1+\tau_2)}{2}\bx^\top \by - \frac{\eta(\lambda_1 -\lambda_2)  (f_1 +g_1)^2}{4} + \frac{\eta(\lambda_1 -\lambda_n)  (f_1 -g_1)^2}{4}  \\
        & = -\frac{(\tau_2 -\tau_1)(\|\bx\|_2^2+\|\by\|_2^2)}{4} + \frac{(\tau_1+\tau_2)}{2}\bx^\top \by - \eta(\lambda_1 -\lambda_2)  f_1 g_1 + \frac{\eta(\lambda_2 -\lambda_n)  (f_1 -g_1)^2}{4} \\
        & \ge -\frac{(\tau_2 -\tau_1)(\|\bx\|_2^2+\|\by\|_2^2)}{4} + \frac{(\tau_1+\tau_2)}{2}\bx^\top \by - \eta(\lambda_1 -\lambda_2)  f_1 g_1.
    \end{align}
\end{proof}

\begin{lemma}\label{lem:upper_alpha1Md}
    Suppose that $\bx = \sum_{i=1}^n f_i \bv_i$ satisfies $\|\bx\|_2=1$ and $\bx^T\bv_1 =\nu > 0$. Let $\bh = \bx-\bv^* = (f_1 - d) \bv_1 + \sum_{i=2}^n f_i \bv_i$. We have
    \begin{equation}
        (f_1 - d)^2 \le \left(\lambda_{\max}(\bB) - \frac{(1+\nu)\lambda_{\min}(\bB)}{2}\right) \|\bh\|_2^2. 
    \end{equation}
\end{lemma}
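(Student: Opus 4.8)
The plan is to pass to the coordinates given by the $\bB$-orthonormal basis $\tilde{\bv}_i=\bB^{1/2}\bv_i$ (the same basis used in the proof of Theorem~\ref{thm:main_thm1}), to bound $(f_1-d)^2$ by the $\bB$-weighted norm of $\bh$, and then to subtract off a sharp lower bound on the energy $\sum_{i\ge 2}f_i^2$ carried by the directions $\bB$-orthogonal to $\bv_1$.

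Concretely, I would first record two elementary facts. Since $\bx$ and $\bv^*$ are unit vectors, $\|\bh\|_2^2=2-2\bx^\top\bv^*=2(1-\nu)$ — here I read $\nu$ as the Euclidean alignment $\bx^\top\bv^*$, i.e.\ the quantity that later enters Theorem~\ref{thm:main_thm2} through $\nu_0=\bu_0^\top\bv^*$. Next, writing $\bh=(f_1-d)\bv_1+\bh_\perp$ with $\bh_\perp:=\sum_{i\ge 2}f_i\bv_i$, the two summands are $\bB$-orthogonal and $\bv_1^\top\bB\bv_1=1$, so expanding $\bB^{1/2}\bh$ in the $\tilde{\bv}_i$ basis gives the Pythagorean identity
\[
(f_1-d)^2 \;=\; \bh^\top\bB\bh - \sum_{i\ge 2}f_i^2 .
\]
I would bound the first term crudely, $\bh^\top\bB\bh\le\lambda_{\max}(\bB)\|\bh\|_2^2$, so that the whole claim reduces to proving $\sum_{i\ge 2}f_i^2\ge(1-\nu^2)\lambda_{\min}(\bB)$.

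For this last bound I would first drop back to the Euclidean norm via $\sum_{i\ge 2}f_i^2=\bh_\perp^\top\bB\bh_\perp\ge\lambda_{\min}(\bB)\|\bh_\perp\|_2^2$, reducing the task to $\|\bh_\perp\|_2^2\ge 1-\nu^2$. Since $\bx=\bv^*+\bh=f_1\bv_1+\bh_\perp$, the two scalar constraints $\|\bx\|_2^2=1$ and $\bx^\top\bv^*=d\,\bx^\top\bv_1=\nu$ (using $\|\bv_1\|_2^2=1/d^2$) become two equations in the unknowns $f_1$, $\bv_1^\top\bh_\perp$, and $\|\bh_\perp\|_2^2$; eliminating the Euclidean cross term $\bv_1^\top\bh_\perp$ yields $\|\bh_\perp\|_2^2=(1-\nu^2)+(f_1-\nu d)^2/d^2\ge 1-\nu^2$. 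Substituting this and $1-\nu^2=(1-\nu)(1+\nu)=\frac{1}{2}(1+\nu)\|\bh\|_2^2$ into the identity above delivers $(f_1-d)^2\le\lambda_{\max}(\bB)\|\bh\|_2^2-\frac{1}{2}(1+\nu)\lambda_{\min}(\bB)\|\bh\|_2^2$, which is the claim.

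The step I expect to be the real content is the lower bound $\|\bh_\perp\|_2^2\ge 1-\nu^2$: because the $\bv_i$ are only $\bB$-orthonormal and not Euclidean-orthonormal, the cross term $\bv_1^\top\bh_\perp$ need not vanish and $\|\bh_\perp\|_2^2$ cannot simply be read off from the coordinates; the point is that the normalization of $\bx$ and its alignment with $\bv^*$ supply exactly two equations — just enough to eliminate that cross term — after which the inequality is forced by the trivial fact $(f_1-\nu d)^2\ge 0$. (I note that the statement writes ``$\bx^\top\bv_1=\nu$''; for the bound to take the stated form one should read $\nu=\bx^\top\bv^*=d\,\bx^\top\bv_1$, equivalently interpret the displayed $\bv_1$ as $\bv^*$, so that $\|\bh\|_2^2=2(1-\nu)$.)
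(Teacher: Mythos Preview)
Your proposal is correct and follows essentially the same route as the paper: both arguments use the $\bB$-Pythagorean identity $(f_1-d)^2+\sum_{i\ge 2}f_i^2=\bh^\top\bB\bh\le\lambda_{\max}(\bB)\|\bh\|_2^2$ and then lower-bound $\sum_{i\ge 2}f_i^2$ by $\lambda_{\min}(\bB)\,\|\bh_\perp\|_2^2\ge\lambda_{\min}(\bB)(1-\nu^2)$. The only cosmetic difference is that the paper obtains $\|\bh_\perp\|_2^2\ge 1-\nu^2$ via the projection inequality $\|\bx-\nu\bv^*\|_2\le\|\bx-f_1\bv_1\|_2$, whereas you derive the exact identity $\|\bh_\perp\|_2^2=(1-\nu^2)+(f_1-\nu d)^2/d^2$ by eliminating the cross term; your reading of $\nu$ as $\bx^\top\bv^*$ matches the paper's own usage.
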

\begin{proof}
    We have
    \begin{equation}
        (f_1 - d)^2 + \sum_{i\ge 2} f_i^2 = \left\|\bB^{1/2}\bh\right\|_2^2 \le \lambda_{\max}(\bB) \|\bh\|_2^2.\label{eq:lower_alpha1Md_ub}
    \end{equation}
    In addition, if letting $\varepsilon = 1-\bx^\top \bv^* = 1-\nu$, we have $\varepsilon < 1$ and 
    \begin{align}
        \frac{(1+\nu)\|\bh\|_2^2}{2} = (2-\varepsilon)\varepsilon = \varepsilon \le 2\varepsilon - \varepsilon^2 = \left\|\bx - (1-\varepsilon)\bv^*\right\|_2^2 \le \|\bx - \alpha_1 \bv_1\|_2^2 \le \frac{1}{\lambda_{\min}} \sum_{i\ge 2} f_i^2. \label{eq:lower_alpha1Md_lb}
    \end{align}
    Combining Eqs.~\eqref{eq:lower_alpha1Md_ub} and~\eqref{eq:lower_alpha1Md_lb}, we obtain
    \begin{equation}
        (f_1 - d)^2 \le \left(\lambda_{\max}(\bB) - \frac{(1+\nu)\lambda_{\min}(\bB)}{2}\right) \|\bh\|_2^2. 
    \end{equation}
\end{proof}

\subsection{Complete Proof of Theorem~\ref{thm:main_thm2}}
\label{sec:proof_thm2}

Let $\bh_t = \bu_t - \bv^* = \bu_t - d \bv_1$ with $d = 1/\|\bv_1\|_2$, and $\tilde{\bu}_{t+1} = \bu_t + \eta(\hat{\bA}-\rho_t \hat{\bB})\bu_t$. Then, since $\bu_{t+1} = \calP_{G}(\tilde{\bu}_{t+1})$ and $\bv^*\in\calR(G)$, we obtain
\begin{align}
    \|\tilde{\bu}_{t+1} - \bu_{t+1}\|_2^2 \le \|\tilde{\bu}_{t+1} - \bv^*\|_2^2,
\end{align}
which gives
\begin{align}
    &\|\bh_{t+1}\|_2^2 = \|\bu_{t+1}-\bv^*\|_2^2 \le 2\langle \tilde{\bu}_{t+1}-\bv^*, \bh_{t+1} \rangle \\
    & = 2 \langle \bh_t - \eta (\rho_t \hat{\bB} - \hat{\bA}) \bu_t, \bh_{t+1} \rangle \label{eq:discussion_imp_eq2} \\
    & = 2\bh_t^\top \bh_{t+1} - 2\eta \langle (\lambda_1 \bB -\bA)\bu_t,\bh_{t+1}\rangle \nn \\
    & \quad + 2\eta \langle (\lambda_1 \bB-\rho_t \hat{\bB} + (\hat{\bA}-\bA))\bu_t, \bh_{t+1} \rangle \\
    & = 2\bh_t^\top \bh_{t+1} - 2\eta \langle (\lambda_1 \bB -\bA)\bh_t,\bh_{t+1}\rangle \nn \\
    & \quad + 2\eta \langle (\lambda_1 \bB - \rho_t \hat{\bB})\bu_t,\bh_{t+1} \rangle + 2\eta \langle (\hat{\bA}-\bA)\bu_t,\bh_{t+1} \rangle \\
    & = 2\bh_t^\top \bh_{t+1} - 2\eta \langle (\lambda_1 \bB -\bA)\bh_t,\bh_{t+1}\rangle \nn \\
    & \quad + 2\eta \langle (\lambda_1-\rho_t) \bB\bu_t,\bh_{t+1} \rangle + 2\eta \langle \rho_t (\bB-\hat{\bB})\bu_t,\bh_{t+1} \rangle  + 2\eta \langle (\hat{\bA}-\bA)\bu_t,\bh_{t+1} \rangle. \label{eq:allTerms} 
\end{align}
Let us bound the terms in the right-hand side of Eq.~\eqref{eq:allTerms} separately. 
\begin{itemize}
    \item The term \underline{$2\bh_t^\top \bh_{t+1} - 2\eta \langle (\lambda_1 \bB -\bA)\bu_t,\bh_{t+1}\rangle$}: If letting 
\begin{align}
    \bu_t = \sum_{i=1}^n \alpha_i \bv_i, \quad  \bu_{t+1} = \sum_{i=1}^n \beta_i \bv_i,
\end{align}
we obtain
\begin{align}
    \bh_t &= \bu_t - d \bv_1 = (\alpha_1 -d) \bv_1 + \sum_{i=2}^n \alpha_i \bv_i, \\
    \bh_{t+1} &= \bu_{t+1} - d \bv_1 = (\beta_1 -d) \bv_1 + \sum_{i=2}^n \beta_i \bv_i. 
\end{align}
Moreover, if letting $\gamma_1 = \eta(\lambda_1 -\lambda_2)\lambda_{\min}(\bB)$ and $\gamma_2 = \eta (\lambda_1-\lambda_n)\lambda_{\max}(\bB)$, from Lemma~\ref{lem:basic_bds1}, we obtain
\begin{align}
    & 2\bh_t^\top\bh_{t+1} - 2\eta \langle (\lambda_1 \bB- \bA)\bh_t, \bh_{t+1} \rangle \\
    & \le (2-(\gamma_1 +\gamma_2))\bh_t^\top\bh_{t+1} + \frac{(\gamma_2 -\gamma_1)(\|\bh_t\|_2^2 + \|\bh_{t+1}\|_2^2)}{2} \nn \\
    & \quad + 2\eta(\lambda_1-\lambda_2)(\alpha_1-d)(\beta_1-d) \\
    & \le  (2-(\gamma_1 +\gamma_2))\bh_t^\top\bh_{t+1} + \frac{(\gamma_2 -\gamma_1)(\|\bh_t\|_2^2 + \|\bh_{t+1}\|_2^2)}{2} \nn \\
    & \quad + \frac{2\gamma_1\|\bh_t\|_2\cdot \|\bh_{t+1}\|_2}{\lambda_{\min}(\bB)} \sqrt{\lambda_{\max}(\bB) -\frac{(1+\nu_t)\lambda_{\min}(\bB)}{2}}\sqrt{\lambda_{\max}(\bB) -\frac{(1+\nu_{t+1})\lambda_{\min}(\bB)}{2}}  \label{eq:use_ub_alpha1Md} \\
    & =  (2-(\gamma_1 +\gamma_2))\bh_t^\top\bh_{t+1} + \frac{(\gamma_2 -\gamma_1)(\|\bh_t\|_2^2 + \|\bh_{t+1}\|_2^2)}{2} \nn \\
    & \quad + \gamma_1 \sqrt{2\kappa(\bB) -(1+\nu_t)}\sqrt{2\kappa(\bB) -(1+\nu_{t+1})} \|\bh_t\|_2 \cdot \|\bh_{t+1}\|_2 \\
    & \le  (2-(\gamma_1 +\gamma_2))\bh_t^\top\bh_{t+1} + \frac{1}{2}(\gamma_2 -\gamma_1)(\|\bh_t\|_2^2 + \|\bh_{t+1}\|_2^2)  \nn \\
    & \quad + \gamma_1 \left(2\kappa(\bB) -(1+\nu_0)\right)\|\bh_t\|_2 \cdot \|\bh_{t+1}\|_2,\label{eq:bound_first_term} 
\end{align}
where we set $\nu_t := \bu_t^\top\bv^*$,\footnote{By proof of induction, using the same proof strategy to follow, we can show that the sequence $\{\nu_t\}_{t \ge 0}$ is monotonically non-decreasing. We omit the details for brevity.} $\kappa(\bB) = \lambda_{\max}(\bB)/\lambda_{\min}(\bB)$, and Eq.~\eqref{eq:use_ub_alpha1Md} follows from Lemma~\ref{lem:upper_alpha1Md}.

\item The term \underline{$2\eta \langle (\lambda_1-\rho_t) \bB\bu_t,\bh_{t+1} \rangle$}: We have 
    \begin{align}
        &\lambda_1 - \rho_t  = \lambda_1 - \frac{\bu_t^\top (\bA + \bE)\bu_t}{\bu_t^\top (\bB + \bF)\bu_t} \\
        & = \frac{\sum_{i =2}^n (\lambda_1 -\lambda_i)\alpha_i^2 + \bu_t^\top(\lambda_1\bF-\bE)\bu_t}{\sum_{i=1}^n \alpha_i^2 + \bu_t^\top \bF\bu_t}. \label{eq:lambda1_rhot_init}
    \end{align}
    Note that from the calculations in Lemma~\ref{lem:upper_alpha1Md} ({\em cf.} Eq.~\eqref{eq:lower_alpha1Md_ub}), we know 
\begin{equation}
    \sum_{i\ge 2} \alpha_i^2 \le \lambda_{\max}(\bB)\|\bh_t\|_2^2.\label{eq:alphas_ub}
\end{equation}
In addition, we have
    \begin{align}
      1= \|\bu_t\|_2^2 = \left\|\bB^{-1/2}\bB^{1/2}\bu_t\right\|_2^2 \le \lambda_{\max}^2\left(\bB^{-1/2}\right) \left\|\bB^{1/2}\bu_t\right\|_2^2 = \frac{1}{\lambda_{\min}(\bB)}\sum_{i=1}^n \alpha_i^2,
    \end{align}
    which gives
    \begin{equation}
        \sum_{i=1}^n \alpha_i^2 \ge \lambda_{\min}(\bB).  \label{eq:alphas_lb} 
    \end{equation}
    Similarly to Eq.~\eqref{eq:hatbs_decomp}, we know that $\bu_t$ can be written as 
    \begin{equation}
        \bu_t = \bu_t - \bar{\bu}_t + \bar{\bu}_t,
    \end{equation}
    where $\bar{\bu}_t \in G(M)$ satisfies the condition that $\|\bu_t - \bar{\bu}_t\|_2 \le \delta$. Then, we have
    \begin{align}
        \left|\bu_t^\top \bF\bu_t\right| & = \left|(\bu_t - \bar{\bu}_t + \bar{\bu}_t)^\top \bF(\bu_t - \bar{\bu}_t + \bar{\bu}_t)\right| \\
        & \le \frac{C' n\delta}{m} + \left|\bar{\bu}_t^\top \bF\bar{\bu}_t\right| \\
        & \le \frac{C' n\delta}{m} + C\sqrt{\frac{k \log \frac{4Lr}{\delta}}{m}} \\
        & \le C\sqrt{\frac{k \log \frac{4Lr}{\delta}}{m}},\label{eq:lambda1_rhot_ineq1}
    \end{align}
    where Eq.~\eqref{eq:lambda1_rhot_ineq1} follows from the condition that $\delta = O\big(\big(k \log \frac{4Lr}{\delta}\big)/m\big)$. Similarly, we can obtain that 
    \begin{equation}
        \left|\bu_t^\top(\lambda_1\bF-\bE)\bu_t\right| \le C(|\lambda_1| + 1) \sqrt{\frac{k \log \frac{4Lr}{\delta}}{m}}. \label{eq:lambda1_rhot_ineq2}
    \end{equation}
    Combining Eqs.~\eqref{eq:lambda1_rhot_init},~\eqref{eq:alphas_ub},~\eqref{eq:alphas_lb},~\eqref{eq:lambda1_rhot_ineq1}, and~\eqref{eq:lambda1_rhot_ineq2}, we obtain
    \begin{align}
        \lambda_1 - \rho_t  & \le \frac{(\lambda_1-\lambda_n)\lambda_{\max}(\bB) \|\bh_t\|_2^2 + C(|\lambda_1| + 1) \sqrt{\frac{k \log \frac{4Lr}{\delta}}{m}}}{\lambda_{\min}(\bB) - C\sqrt{\frac{k \log \frac{4Lr}{\delta}}{m}}} \\
        & \le \frac{3(\lambda_1-\lambda_n)\lambda_{\max}(\bB) \|\bh_t\|_2^2 + 3C(|\lambda_1| + 1) \sqrt{\frac{k \log \frac{4Lr}{\delta}}{m}}}{2\lambda_{\min}(\bB) }.\label{eq:lambda1_rhot_final}
    \end{align}
    Therefore, 
    \begin{align}
        &2\eta |\langle (\lambda_1-\rho_t) \bB\bu_t,\bh_{t+1} \rangle|  \le 2\eta (\lambda_1 -\rho_t) \lambda_{\max}(\bB) \|\bh_{t+1}\|_2 \\
        & \le 2\eta \lambda_{\max}(\bB) \|\bh_{t+1}\|_2 \cdot \frac{3(\lambda_1-\lambda_2)\lambda_{\max}(\bB) \|\bh_t\|_2^2 + 3C(|\lambda_1| + 1) \sqrt{\frac{k \log \frac{4Lr}{\delta}}{m}}}{2\lambda_{\min}(\bB)} \\
        & = 3\eta\kappa(\bB) \|\bh_{t+1}\|_2\cdot \left((\lambda_1-\lambda_n)\lambda_{\max}(\bB) \|\bh_t\|_2^2 + C(|\lambda_1| + 1) \sqrt{\frac{k \log \frac{4Lr}{\delta}}{m}}\right) \\
        & \le 3\kappa(\bB) \|\bh_{t+1}\|_2 \cdot \left(\gamma_2 \|\bh_0\|_2\cdot\|\bh_t\|_2 + C\eta(|\lambda_1| + 1) \sqrt{\frac{k \log \frac{4Lr}{\delta}}{m}}\right) \label{eq:bound_second_term0} \\
        & = \le 3\kappa(\bB) \|\bh_{t+1}\|_2 \cdot \left(\gamma_2 \sqrt{2(1-\nu_0)}\cdot\|\bh_t\|_2 + C\eta(|\lambda_1| + 1) \sqrt{\frac{k \log \frac{4Lr}{\delta}}{m}}\right),\label{eq:bound_second_term} 
    \end{align}
    where Eq.~\eqref{eq:bound_second_term0} follows from the setting that $\gamma_2 = \eta (\lambda_1-\lambda_n)\lambda_{\max}(\bB)$.

    \item The term \underline{$2\eta \rho_t \langle (\bB-\hat{\bB})\bu_t,\bh_{t+1} \rangle$}:  We have 
    \begin{align}
        \left|\langle\bF \bu_t, \bh_{t+1}\rangle\right| &= \left|\langle\bF (\bu_t-\bar{\bu}_t+\bar{\bu}_t), \bu_{t+1} - \bar{\bu}_{t+1} + \bar{\bu}_{t+1} - \bv^*\rangle\right| \\
        & \le \frac{C' n \delta}{m} + \left|\langle\bF \bar{\bu}_t, \bar{\bu}_{t+1} - \bv^* \rangle\right| \\
        & \le \frac{C' n \delta}{m} + C\sqrt{\frac{k \log \frac{4Lr}{\delta}}{m}} \cdot \|\bar{\bu}_{t+1} - \bv^*\|_2 \\
        & \le \frac{C' n \delta}{m} + C\sqrt{\frac{k \log \frac{4Lr}{\delta}}{m}} \cdot (\|\bh_{t+1}\|_2 + \delta). 
    \end{align}
    Therefore, we obtain
    \begin{align}
        \left|2\eta \rho_t \langle (\bB-\hat{\bB})\bu_t,\bh_{t+1} \rangle\right| \le 2\eta \lambda_1 \cdot \left(\frac{C' n \delta}{m} + C\sqrt{\frac{k \log \frac{4Lr}{\delta}}{m}} \cdot (\|\bh_{t+1}\|_2 + \delta)\right).  \label{eq:bound_third_term} 
    \end{align}

    \item The term \underline{$2\eta \langle (\hat{\bA}-\bA)\bu_t,\bh_{t+1} \rangle$}: Similarly to Eq.~\eqref{eq:bound_third_term}, we obtain
    \begin{align}
        \left|2\eta \langle (\hat{\bA}-\bA)\bu_t,\bh_{t+1} \rangle\right| \le 2\eta \cdot \left(\frac{C' n \delta}{m} + C\sqrt{\frac{k \log \frac{4Lr}{\delta}}{m}} \cdot (\|\bh_{t+1}\|_2 + \delta)\right).  \label{eq:bound_fourth_term} 
    \end{align}
\end{itemize}

Combining Eqs.~\eqref{eq:allTerms},~\eqref{eq:bound_first_term},~\eqref{eq:bound_second_term},~\eqref{eq:bound_third_term}, and~\eqref{eq:bound_fourth_term}, we obtain
\begin{align}
    &\|\bh_{t+1}\|_2^2  \le (2-(\gamma_1 +\gamma_2)) \bh_t^\top\bh_{t+1} + \frac{\gamma_2 -\gamma_1}{2} \cdot \left(\|\bh_{t}\|_2^2+\|\bh_{t+1}\|_2^2\right) \nn \\
    & \quad + \gamma_1 (2\kappa(\bB)-(1+\nu_0)) \cdot \|\bh_{t}\|_2 \cdot \|\bh_{t+1}\|_2 \nn \\
 & \quad+ 3\kappa(\bB) \|\bh_{t+1}\|_2 \cdot \left(\gamma_2 \sqrt{2(1-\nu_0)}\cdot\|\bh_t\|_2 + C\eta(|\lambda_1| + 1) \sqrt{\frac{k \log \frac{4Lr}{\delta}}{m}}\right) \nn \\
    & \quad + 2\eta \lambda_1 \cdot \left(\frac{C' n \delta}{m} + C\sqrt{\frac{k \log \frac{4Lr}{\delta}}{m}} \cdot (\|\bh_{t+1}\|_2 + \delta)\right) \nn \\
    & \quad + 2\eta \cdot \left(\frac{C' n \delta}{m} + C\sqrt{\frac{k \log \frac{4Lr}{\delta}}{m}} \cdot (\|\bh_{t+1}\|_2 + \delta)\right). \label{eq:combined_fourTerms}
\end{align}
Then, if letting
\begin{align}
    c_0 &= \frac{\gamma_2 -\gamma_1}{2}, \\
    b_0 &= (2-(\gamma_1 +\gamma_2)) + \gamma_1 (2\kappa(\bB)-(1+\nu_0))  + 3\gamma_2 \kappa(\bB) \sqrt{2(1-\nu_0)}, 
\end{align}
we obtain
\begin{align}
    (1-c_0) \|\bh_{t+1}\|_2^2 &\le \left(b_0 \|\bh_t\|_2 + 2\eta (|\lambda_1| + 1)C\sqrt{\frac{k \log \frac{4Lr}{\delta}}{m}}\right)\cdot \|\bh_{t+1}\|_2 \nn \\
    & \quad + c_0 \|\bh_t\|_2^2 + 2\eta (|\lambda_1| + 1)\left(\frac{C' n \delta}{m} + C\delta\sqrt{\frac{k \log \frac{4Lr}{\delta}}{m}} \right),
\end{align}
which leads to 
\begin{align}
    &\|\bh_{t+1}\|_2  \nn \\
    & \le \frac{\left(b_0 \|\bh_t\|_2 + 2\eta (|\lambda_1| + 1)C\sqrt{\frac{k \log \frac{4Lr}{\delta}}{m}}\right) + \sqrt{(1-c_0)\left(c_0 \|\bh_t\|_2^2 + 2\eta (|\lambda_1| + 1)\left(\frac{C' n \delta}{m} + C\delta\sqrt{\frac{k \log \frac{4Lr}{\delta}}{m}} \right)\right)}}{1-c_0} \\
    & \le \frac{(b_0 + \sqrt{(1-c_0)c_0}) \|\bh_t\|_2 + 2\eta (|\lambda_1| + 1)C\sqrt{\frac{k \log \frac{4Lr}{\delta}}{m}} + \sqrt{2\eta(1-c_0) (|\lambda_1| + 1)\left(\frac{C' n \delta}{m} + C\delta\sqrt{\frac{k \log \frac{4Lr}{\delta}}{m}} \right)}}{1-c_0}.\label{eq:final_long_ineq}
\end{align}
Then, if $\delta >0$ satisfies the condition that $\delta = O\big(\big(k \log \frac{4Lr}{\delta}\big)/n\big)$, we obtain
\begin{align}
    \|\bh_{t+1}\|_2  \le \frac{b_0 + \sqrt{(1-c_0)c_0}}{1-c_0} \cdot \|\bh_t\|_2 + C_2 \sqrt{\frac{k \log \frac{4Lr}{\delta}}{m}},
\end{align}
where $C_2$ is a positive constant obtained from to Eq.~\eqref{eq:final_long_ineq}, depending on $\bA, \bB$, and $\eta$. This completes the proof.

\section{Additional Experiments for MNIST}
\label{app:add_mnist}

In this section, we perform the experiments for the case that $\hat{\bB}$ in Eq.~\eqref{eq:nonID_tildeB} is modified by $\bw_i \sim \calN(\bm{0},\mathrm{Diag}(2,1,\ldots,1))$ (other settings of the experiments remain unchanged), and thus $\bB = \bbE[\hat{\bB}]=\mathrm{Diag}(2,1,\ldots,1)$ and $\kappa(\bB)=\lambda_{\max}(\bB)/\lambda_{\min}(\bB)=2$. The quantitative results are shown in the following table, from which we observe that our \texttt{PRFM} method also performs well in this case. 

\begin{center}
\begin{tabular}{ |c|c|c|c|c| } 
 \hline
 $m$ & \texttt{Rifle20} & \texttt{Rifle100} & \texttt{PPower} & \texttt{PRFM} \\\hline
 100 & $0.17 \pm 0.02$ &	$0.27\pm 0.01$	& $0.75\pm 0.03$	& $0.80\pm 0.02$ \\ \hline
 200 & $0.28 \pm 0.01$ &	$0.43\pm 0.01$	& $0.78\pm 0.01$	& $0.86\pm 0.02$ \\ \hline
 300 & $0.32 \pm 0.01$ &	$0.50\pm 0.01$	& $0.79\pm 0.01$	& $0.91\pm 0.01$ \\ \hline
\end{tabular}
\end{center}

\section{Experimental Results for CelebA}
\label{app:exp_celeba}

The CelebA dataset contains over 200,000 face images of celebrities. Each input image is cropped for the deep convolutional generative adversarial networks (DCGAN) model with a latent dimension of $k = 100$.\footnote{We follow the PyTorch DCGAN tutorial in~\url{https://pytorch.org/tutorials/beginner/dcgan_faces_tutorial.html?highlight=dcgan} to pre-train the model.} Among 5 random restarts, we choose the best estimate. For the projection operator, the Adam optimizer is utilized with 100 steps and a learning rate of 0.1.

Similar to MNIST, we also consider two cases where $(\hat{\bA},\hat{\bB})$ is generated from Eqs.~\eqref{eq:nonID_tildeB} and~\eqref{eq:PR_tildeAB}. The experimental results are presented in Figures~\ref{fig:images_nonID_tildeB_celeba} and~\ref{fig:quant_res_celeba}.

 \begin{figure}
\hspace{-0.5cm}
\begin{tabular}{cc}
\includegraphics[height=0.25\textwidth]{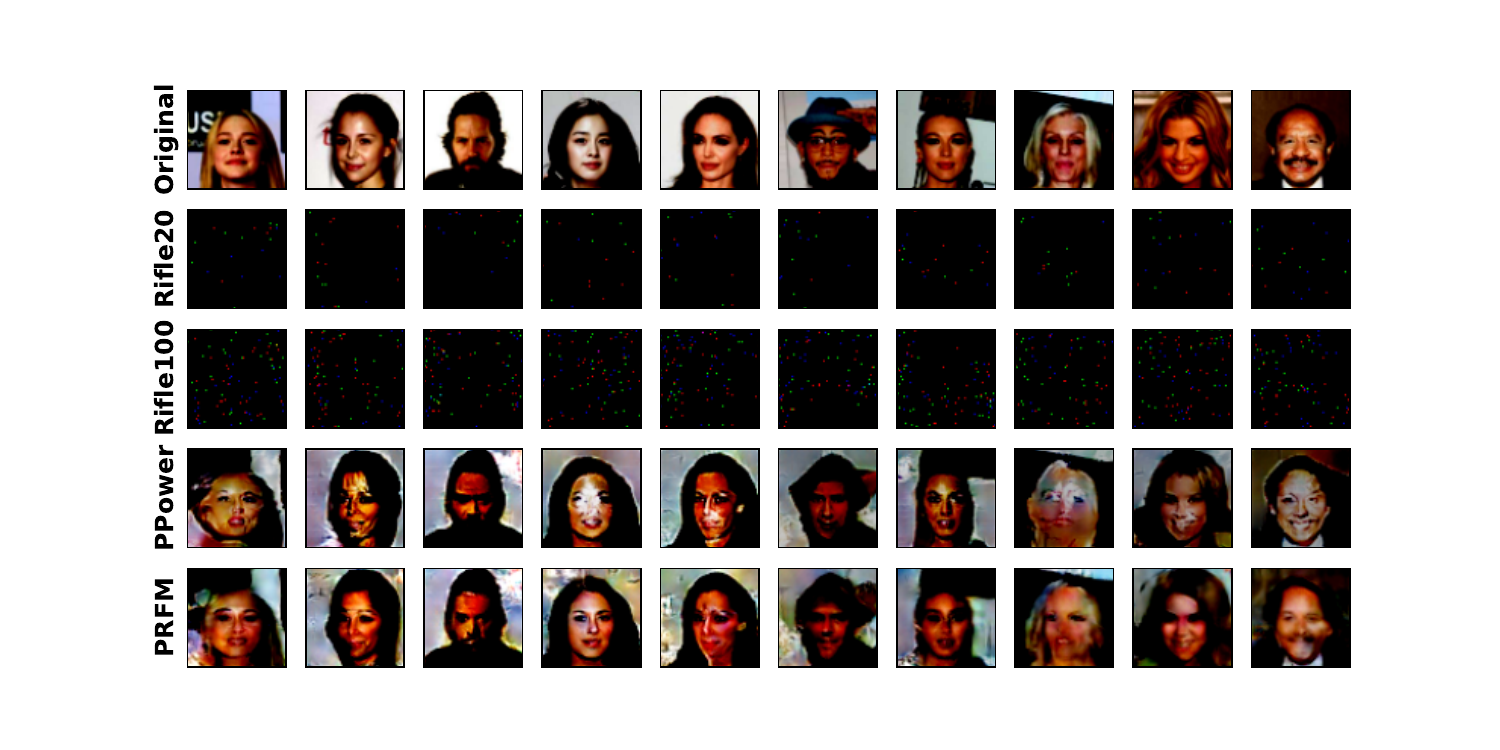} & \hspace{-0.5cm}
\includegraphics[height=0.25\textwidth]{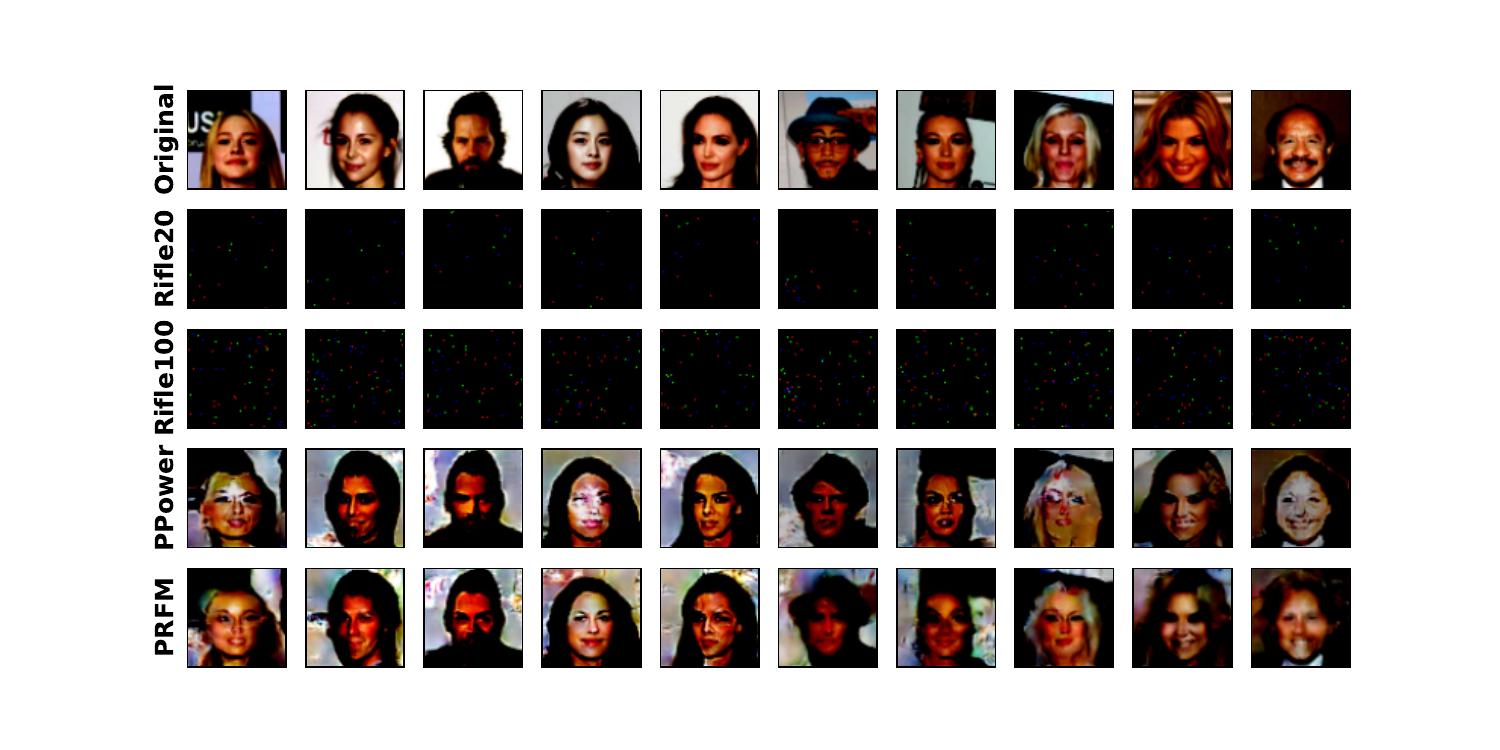} \\
{\small (a) Eq.~\eqref{eq:nonID_tildeB} with $m = 2000$} & \hspace{-0.5cm} {\small (b) Eq.~\eqref{eq:PR_tildeAB} with $m = 4000$} 
\end{tabular}
\caption{Reconstructed images of the CelebA dataset for $(\hat{\bA},\hat{\bB})$ generated from Eqs.~\eqref{eq:nonID_tildeB} and~\eqref{eq:PR_tildeAB}.} \label{fig:images_nonID_tildeB_celeba} 
\end{figure} 

 \begin{figure}
\begin{tabular}{cc}
\includegraphics[height=0.35\textwidth]{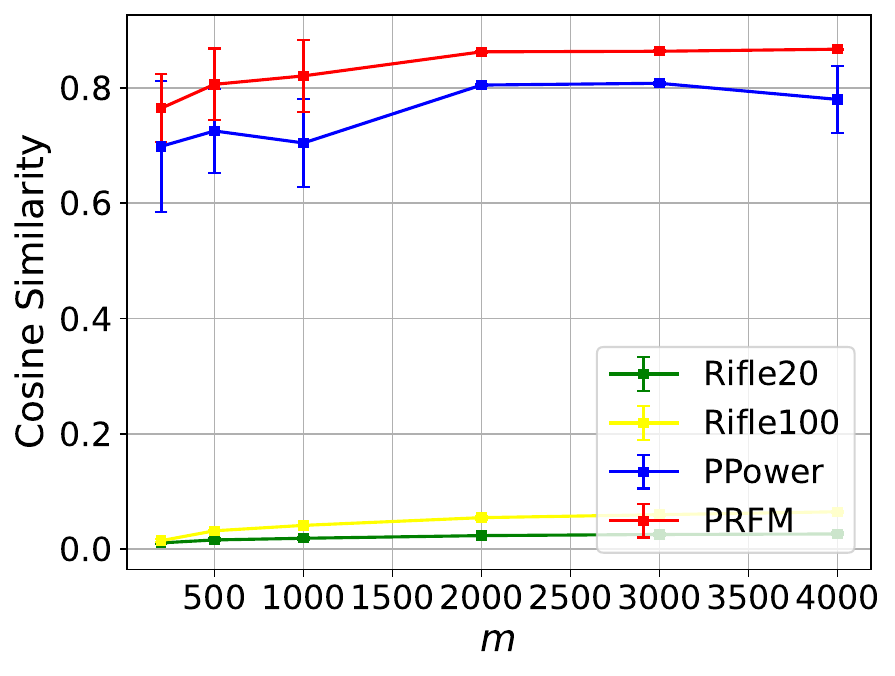} & 
\includegraphics[height=0.35\textwidth]{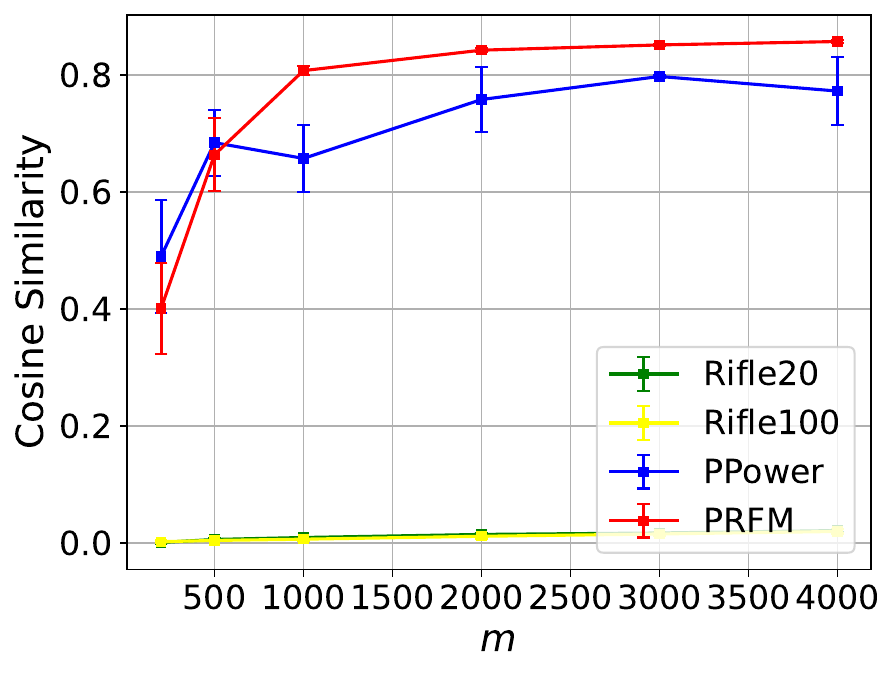} \\
{\small (a) $\hat{\bA}$ in Eq.~\eqref{eq:nonID_tildeB}} &  {\small (b) $\hat{\bA}$ in Eq.~\eqref{eq:PR_tildeAB}} 
\end{tabular}
\caption{Quantitative results of the performance of the methods on CelebA.} \label{fig:quant_res_celeba} 
\end{figure} 

While the CelebA dataset is publicly accessible and widely used, we would like to point out the potential for issues such as ethnic-group switching during face-image reconstruction.

\begin{itemize}
    \item Bias and Stereotyping: If a generative model trained on the CelebA dataset exhibits ethnic-group switching during reconstruction, it could reinforce harmful stereotypes. For example, if certain facial features are associated with a particular ethnicity in an inaccurate or unfair way, it can lead to the misrepresentation and marginalization of that ethnic group. This can also affect how society perceives different ethnicities, potentially leading to discrimination in areas such as employment, housing, and social interactions.

    \item Invasion of Privacy and Consent: In the context of more sensitive data, if a generative model can manipulate or change aspects of an individual's appearance in a way that they did not consent to, it is a serious invasion of privacy. For example, if a model is used to generate images of someone in a different ethnic guise without their permission, it can cause emotional distress and harm to their self-identity. The use of such models on data where the individuals have not given proper consent for this type of manipulation can lead to legal and ethical dilemmas.

 \item Social and Cultural Impact: Changing the ethnicity of a face in an image can have significant social and cultural implications. It can undermine the cultural identity of individuals and communities. For instance, if a model is used to "erase" or change the ethnic features of a cultural icon in an image, it can be seen as an act of cultural appropriation or disrespect. It can also affect the way cultural heritage is represented and passed down, as the authenticity of images related to a particular culture may be compromised by unethical generative model manipulations.
\end{itemize}

\end{document}